\theoremstyle{plain}
\newtheorem{theorem}{Theorem}[section]
\newtheorem{lemma}[theorem]{Lemma}
\theoremstyle{definition}
\theoremstyle{remark}
\def\eqref#1{equation~\ref{#1}}
\newcommand{\bx}{\mathbf{x}}
\newcommand{\R}{\mathbb{R}}
\newcommand{\RankMe}{\textnormal{RankMe}}
\newcommand{\alphaReQ}{\alpha_{\textnormal{ReQ}}}
\newcommand{\defeq}{\coloneqq}
\definecolor{GoogleBlue}{HTML}{4285F4}
\definecolor{GoogleRed}{HTML}{EA4335}
\definecolor{GoogleYellow}{HTML}{FBBC05}
\definecolor{GoogleGreen}{HTML}{34A853}
\newcommand{\Google}{\textcolor{GoogleBlue}{G}\textcolor{GoogleRed}{o}\textcolor{GoogleYellow}{o}\textcolor{GoogleBlue}{g}\textcolor{GoogleGreen}{l}\textcolor{GoogleRed}{e}}
\newlength\savewidth
\newcommand{\tablestyle}[2]{\setlength{\tabcolsep}{#1}\renewcommand{\arraystretch}{#2}\centering\footnotesize}
\setlist[itemize]{leftmargin=11pt, itemsep=1pt, topsep=0pt}
\setlist[enumerate]{leftmargin=11pt, itemsep=1pt, topsep=0pt}
\Crefname{problem}{Problem}{Problems}
\newcommand{\matF}{\mathbf{F}}
\newcommand{\vecf}{\mathbf{f}} 
\newcommand{\vecy}{\mathbf{y}} 
\newcommand{\vecs}{\mathbf{s}}
\newcommand{\sigmai}[1]{\sigma_{#1}}
\newcommand{\addsuppl}{\textcolor{red}{ (see Appendix) }}
\newcommand{\warmup}{\textcolor{Gray}{``warmup'' }}
\newcommand{\entropy}{\textcolor{Maroon}{``entropy-seeking'' }}
\newcommand{\compression}{\textcolor{BlueViolet}{``compression-seeking'' }}
\newcommand{\LCE}{\mathcal{L}_{CE}}
\newcommand{\hLCE}{\hat{\mathcal{L}}_{CE}}
\newcommand{\fo}{f_\theta}
\newcommand{\hy}{\hat{y}}
\newcommand{\bhy}{\mathbf{\hat{y}}}
\newcommand{\ta}{\tilde{\alpha}}
\newcommand{\pderiv}[1]{\frac{\partial}{\partial #1}}
\newtheorem*{rep@definition}{\rep@title}
\newcommand{\newrepdefinition}[2]{%
	\newenvironment{rep#1}[1]{%
		\def\rep@title{#2 \ref{##1}}%
		\begin{rep@definition}}%
		{\end{rep@definition}}}
\title{Tracing the Representation Geometry of Language Models from Pretraining to Post-training}
\author[1,2,$*$]{Melody Zixuan Li}
\author[3,$*$]{Kumar Krishna Agrawal}
\author[1,2,9,$*$]{Arna Ghosh}
\author[4]{Komal Kumar Teru}
\author[5,$\dagger$]{Adam Santoro}
\author[2,6,9]{Guillaume Lajoie}
\author[1,2,7,8,9]{Blake A. Richards}
\affil[1]{Computer Science, McGill University}
\affil[2]{Mila - Quebec AI Institute}
\affil[3]{UC Berkeley}
\affil[4]{Cohere}
\affil[5]{\Google\ Deepmind}
\affil[6]{Mathematics and Statistics, Universit\'e de Montr\'eal}
\affil[7]{Neurology \& Neurosurgery and Montreal Neurological Institute, McGill University}
\affil[8]{CIFAR Learning in Machines \& Brains Program}
\affil[9]{\Google, Paradigms of Intelligence Team}
\affil[*]{Equal contribution}
\affil[$\dagger$]{Advisory capacity only}
\begin{document}

\maketitle

\setcounter{footnote}{0}

\abstract
\textbf{Abstract:} Standard training metrics like loss fail to explain the emergence of complex capabilities in large language models. We take a spectral approach to investigate the geometry of learned representations across pretraining and post-training, measuring effective rank ($\RankMe{}$) and eigenspectrum decay ($\alphaReQ$). With OLMo (1B-7B) and Pythia (160M-12B) models, we uncover a consistent non-monotonic sequence of three geometric phases during autoregressive pretraining. The initial \warmup phase exhibits rapid representational collapse. This is followed by an \entropy phase, where the manifold's dimensionality expands substantially, coinciding with peak n-gram memorization. Subsequently, a \compression phase imposes anisotropic consolidation, selectively preserving variance along dominant eigendirections while contracting others, a transition marked with significant improvement in downstream task performance. We show these phases can emerge from a fundamental interplay of cross-entropy optimization under skewed token frequencies and representational bottlenecks ($d \ll |\mathcal{V}|$). Post-training further transforms geometry: SFT and DPO drive \entropy dynamics to integrate specific instructional or preferential data, improving in-distribution performance while degrading out-of-distribution robustness. Conversely, RLVR induces \compression, enhancing reward alignment but reducing generation diversity. 
\endabstract
\section{Introduction}

Loss curves during training offer an incomplete account of how large language models (LLMs) learn specific behaviors \citep{wei2022emergent, ganguli2022predictability}. While training loss decreases monotonically \citep{kaplan2020scaling,hoffmann2022training}, model capabilities and internal representational structures exhibit significant qualitative shifts \citep{singh2023transient, brown2023understanding, singh2024hallmarks}. This disconnect highlights a fundamental challenge: How do high-dimensional distributed representations within LLMs evolve during training, and how do these representational transformations give rise to emergent capabilities?

We answer this question by using spectral analysis to quantify the geometric evolution of LLM representations. We discover that this evolution is not a smooth progression but a consistent, three-phase dynamic. Our method centers on the spectral properties of the covariance matrix of last-token representations, which capture rich information about the model's internal representations, especially when using causal attention. To measure this geometric structure, we compute two metrics from the eigenspectrum of these matrices: the effective rank ($\RankMe{}$), derived from the Von Neumann entropy, and the power-law decay rate ($\alphaReQ$) of the eigenvalues \citep{garrido2023rankme, agrawal2022alpha}.  These spectral measures of representation geometry have been linked theoretically and experimentally to generalization in downstream tasks \citep{bartlett2020benign, thilak2023lidar}. Intuitively, representation geometry tells us about the model's expressive capacity, utilization, and amount of data compression.

\begin{figure}[t]
    \centering
    \includegraphics[width=\linewidth]{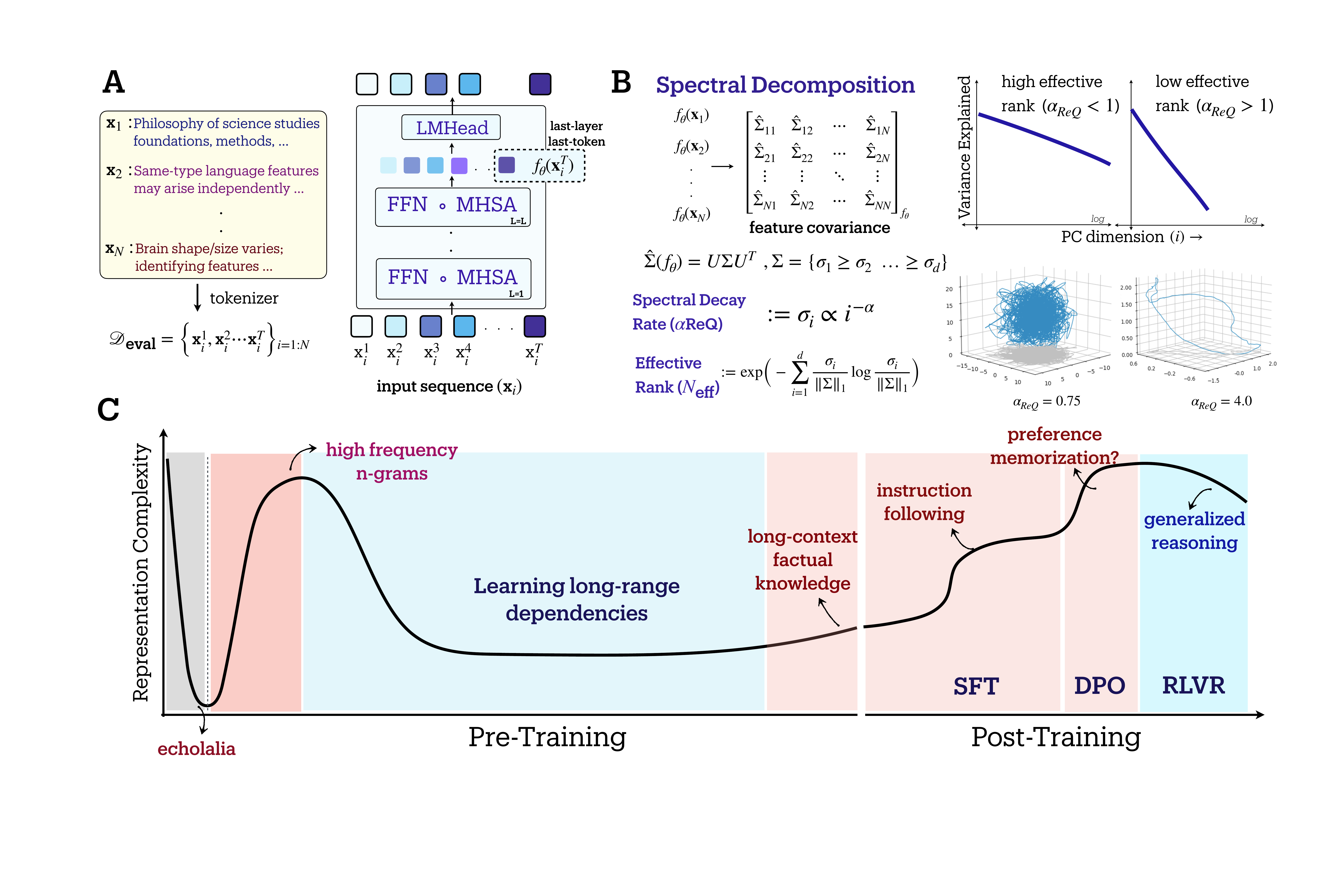}
    \caption{
    \textbf{Spectral framework reveals three universal phases in LLM training.}
    \textbf{(A)} LLM representations analyzed via empirical feature covariance $\hat{\Sigma}(f_{\theta})$ of last-token hidden states $f_{\theta}(x_i)$.
    \textbf{(B)} Two complementary spectral metrics: $\alpha\text{ReQ}$ measures eigenspectrum decay rate (variance concentration), while RankMe quantifies effective rank (utilized dimensionality). 
    \textbf{(C)} Pretraining exhibits three phases: \warmup (rapid collapse), \entropy (2-3× expansion coinciding with n-gram memorization), and \compression (anisotropic consolidation enabling long-context understanding). Post-training continues these dynamics: SFT/DPO induce \entropy while RLVR induces \compression.
}
    \label{fig:schematic}
\end{figure}

Our analysis shows that LLM pretraining unfolds through a consistent sequence of distinct geometric phases marked by non-monotonic evolution of spectral properties. These phases correlate with significant shifts in the model's expressive power and information compression ability (c.f. \Cref{fig:schematic}): 
\begin{itemize}
\item An initial \textbf{\warmup} phase, coinciding with learning rate ramp-up, where there is a rapid collapse of representations onto dominant data manifold directions.
\item An \textbf{\entropy} phase marked by manifold expansion in many directions, which correlates with an increase in n-gram distributional memorization. 
\item A \textbf{\compression} phase with anisotropic consolidation along principal feature eigenvectors shows enhanced learning of long-range dependencies and robust generalization.
\end{itemize}  
We further develop mechanistic insights from analytically tractable toy models, demonstrating that these geometric phase transitions are influenced by the interplay of cross-entropy optimization, information bottlenecks, and skewed data distributions. 

Our investigation of post-training stages reveals analogous geometric shifts: Supervised fine-tuning (SFT) continues an \entropy-like manifold expansion with concomitant assimilation of specific instructions. Reinforcement Learning from Verifiable Rewards (RLVR) produces a \compression-like contraction, which can consolidate reward-aligned behaviors yet curtail generative novelty and exploration. These findings offer a more granular view of LLM training, and offer some practical implications for optimizing LLM training and adaptation pipelines based on desired downstream outcomes.

\section{Methods}
\label{sec:methods}
\subsection{Spectral Analysis, Matrix Entropy, and Effective Rank}

{\bf Last token representations in autoregressive language models}: A rigorous understanding of LLM capabilities necessitates a precise characterization of the \textit{geometry of their learned representations}. An autoregressive language model processes an input sequence of discrete tokens $\vecs = (t_1, t_2, \dots, t_N)$, transforming each token $t_k$ through its $l$ layers (conditioned on preceding tokens $t_{<k}$) into a sequence of high-dimensional continuous vectors $\vecf_\theta^{(l)}(t_k | t_{<k})$. For autoregressive models, the representation of the final token ($t_N$) at the last layer, $\vecy_N \defeq \vecf_\theta^{(L)}(t_N | t_{<N})$, is particularly pivotal. Its significance stems from different factors: (i) it directly parameterizes the predictive distribution for the subsequent tokens $P(t_{N+1} | t_1, ..., t_N)$; (ii) it synthesizes information from the entire context $t_{\leq N}$ to inform this prediction, meaning it inherently reflects the model's capacity for contextual understanding; and (iii) is often used as input to task-specific layers in downstream applications. 

{\bf High-dimensional representation complexity metrics}: To quantitatively measure representation geometry, we perform spectral analysis of the feature covariance matrix. Given a set of $M$ input sequences, we form a feature matrix $\matF \in \R^{M \times d}$; each row is a feature vector of the last token $\vecy_N$ for each input. Assuming the features are centered, the empirical covariance matrix is $\hat{\Sigma} \defeq \frac{1}{M} \matF^T \matF$. The eigenspectrum of $\hat{\Sigma}$, denoted by eigenvalues $\{\sigmai{i}(\hat{\Sigma})\}_{i=1}^d$, measures the concentration of information along the principal axes of variation. The distribution of $\{\sigmai{i}\}_{i=1}^d$ provides a quantitative description of feature geometry: a sharp decay indicates information compressed in a lower-dimensional subspace (anisotropic geometry), while a slow decay indicates a high-dimensional subspace is utilized.

This spectral perspective motivates using \textit{matrix entropy} to measure the uniformity of the eigenvalue distribution. If $p_i = \sigmai{i} / (\sum_j \sigmai{j})$ is the proportion of variance along the $i$-th principal axis, the Von Neumann \textit{entropy-based effective rank} \citep{roy2007effective, garrido2023rankme} is defined as:
\begin{equation}
\label{eq:matrix_entropy}
\RankMe{} \defeq \exp \Big({S(\hat{\Sigma})} \Big) = 
\exp \Big(- \sum_{i=1}^d p_i \ln p_i \Big) \quad \in (0, d].
\end{equation}
Low entropy indicates a skewed eigenvalue distribution, i.e. low-dimensional (anisotropic) representations, while high entropy implies a uniform spread, i.e. high-dimensional (isotropic) representations. 

Our empirical studies also show that LLM activation matrices exhibit \textit{heavy-tailed} eigenvalue spectra, i.e., a power law distribution where $\sigma_i \propto i^{-\alpha_{\text{ReQ}}}$, where $\alpha_{\text{ReQ}} > 0$ \citep{ghosh2022investigating}. 
Slower decay or smaller $\alphaReQ$ implies a more uniform spread of $\sigma_i$'s (higher dimensional), and thus higher $S(\hat{\Sigma})$ and $\RankMe{}$. Conversely, faster decay or larger $\alphaReQ$ implies representations are compactly packed along fewer principal directions \citep{stringer2019high, agrawal2022alpha}, yielding lower entropy and smaller $\RankMe{}$. $\alphaReQ$ and $\RankMe{}$ thus provide related metrics of representation geometry, though unlike $\RankMe{}$, $\alphaReQ$ does not change with the model's feature dimensionality, $d$.

\subsection{Quantifying Distributional Memorization and Generalization via n-gram Alignment}

To dissect how LLMs utilize their pretraining corpus $\mathcal{D}$, we differentiate \textit{distributional memorization}, i.e. how aligned are LLM output probabilities with n-gram frequencies in $\mathcal{D}$, from \textit{distributional generalization}, i.e. LLM capabilities beyond such statistics \citep{liu2024infini}. 
To quantify the alignment with n-gram statistics, we use the $\infty$-gram language model (LM) which uses the largest possible value of $n$ for predicting the next token probability. Briefly, an $\infty$-gram LM can be viewed as a generalized version of an $n$-gram LM which starts with $n = \infty$, and then performs backoff till the $n$-gram count in $\mathcal{D}$ is non-zero \citep{liu2024infini}. Consequently, the output probability of the $\infty$-gram LM for each token is dependent on its longest existing prefix in $\mathcal{D}$.

The distributional memorization metric is defined as the spearman rank correlation ($\rho_s$) between the $\infty$-gram LM outputs and the LLM outputs for all tokens in a target sequence \citep{wang2025generalizationvsmemorization}. Formally, consider a concatenated sequence of instructions, $u$, question, $x$ and target, $y$, from a question-answering task, $\mathcal{T}$. Then, the distributional memorization is computed as:
\begin{equation}
    Mem_{\infty}(LLM, \mathcal{D}, \mathcal{T}) \defeq \rho_s \left( \bar{P}_{\infty, \mathcal{D}}(y | u \oplus x), \bar{P}_{LLM}(y | u \oplus x)\right)
    \label{eq:dist_mem}
\end{equation}
where $\bar{P}_{.}(y | u \oplus x) \defeq \prod_{t_i \in y} P_{.}(t_i | u \oplus x \oplus y_{[t_0:t_{i-1}]})$ denotes the joint likelihood of all tokens in $y$ and $P_{.}()$ is the next token prediction distribution, as described above.




\subsection{Post-Training Methodologies and Evaluation}
\label{subsec:methods_post_training_formal_defs}

{\bf Supervised Fine-Tuning (SFT)} adapts pre-trained LLMs by further training on a curated dataset $\mathcal{D}_{\text{SFT}} = \{(x_i, y_i)\}_{i=1}^{N_{\text{SFT}}}$ typically consisting of instruction-response pairs. The standard objective is to minimize the negative log-likelihood of the target responses, effectively maximizing $P_{\theta}(y|x)$ for examples in $\mathcal{D}_{\text{SFT}}$.
We evaluate the robustness of the SFT model by contrasting its performance on held-out examples from $\mathcal{D}_{\text{SFT}}$ (In-Distribution, ID) with its performance on examples from a related but distinct dataset $\mathcal{D}_{\text{OOD}}$ (Out-of-Distribution, OOD), which may vary in task, style, or complexity not present in $\mathcal{D}_{\text{SFT}}$ \citep{springer2025overtrained}.

{\bf Preference Alignment and Reasoning} : For alignment beyond SFT, we consider Direct Preference Optimization (DPO) \citep{rafailov2023direct} and Reinforcement Learning from Verifiable Rewards (RLVR). DPO refines an LLM policy $\pi_{\theta}$ based on a static dataset of human preferences $\mathcal{D}_{\text{pref}} = \{(x, y_w, y_l)\}$, where the response $y_w$ is preferred over $y_l$ for prompt $x$. It directly optimizes for preference satisfaction by minimizing the loss:
\begin{equation}
\label{eq:dpo_loss}
\mathcal{L}_{\text{DPO}}(\pi_{\theta}; \pi_{\text{ref}}) = -\mathbb{E}_{(x, y_w, y_l) \sim \mathcal{D}_{\text{pref}}} \left[ \log \sigma \left( \hat{r}_{\theta}(x,y_w) - \hat{r}_{\theta}(x,y_l) \right) \right],
\end{equation}
where $\hat{r}_{\theta}(x,y) = \beta \log (\pi_{\theta}(y|x) / \pi_{\text{ref}}(y|x))$ represents the implicit log-ratio of probabilities scaled by $\beta$ against a reference policy $\pi_{\text{ref}}$, and $\sigma(.)$ is the logistic function. Reinforcement Learning from Verifiable Rewards (RLVR), as applied in works like \citep{lambert2024t} and \citep{shao2024deepseekmath}, optimizes the LLM's policy $\pi_{\theta}$ to maximize the expected discounted cumulative reward, $J(\theta) = \mathbb{E}_{\tau \sim \pi_{\theta}} \left[ \sum_{t=0}^{T} \gamma^t R_t \right]$, where $\tau=(s_0, a_0, \dots, s_T, a_T)$ is a trajectory generated by actions $a_t \sim \pi_{\theta}(\cdot|s_t)$ in states $s_t$, $\gamma \in [0,1]$ is a discount factor, and $R_t = R(s_t, a_t)$ is the reward at time $t$. This optimization is typically performed using policy gradient algorithms (e.g., PPO). Critically, the reward $R_t$ in RLVR is derived from verifiable properties of the LLM's outputs, e.g. correctness on mathematical problems or passing unit tests.

{\bf Performance with pass@k}: To evaluate problem-solving efficacy and generative exploration, particularly for RLVR-tuned models, we employ the pass@k metric \citep{kulal2019spoc}. For a given problem, k independent responses are stochastically generated from the model; the problem is deemed solved if at least one response constitutes a verifiable solution. Since direct estimation of pass@k can exhibit high variance, we utilize the unbiased estimator \citep{chen2021evaluating, yue2025does}: 
\begin{equation} \texttt{pass@k} = \mathbb{E}_{P_i} \left[ 1 - \frac{\binom{N-c_i}{k}}{\binom{N}{k}} \right] \end{equation} 
where, N samples are generated for each problem $P_i$ , and $c_i$ denotes the count of correct solutions among them (parameters for this work are N=512 and k $\leq$ 256).

\section{Probing the representation geometry of language models}
\label{sec:results}

To study LLM representation geometry at intermediate stages of the training lifecycle, we analyze checkpoints from three publicly released model suites. We defer additional details on the model architecture, dataset and training run to \Cref{appendix:model_dataset}.

\begin{figure}[t]
    \centering
    \includegraphics[width=\linewidth]{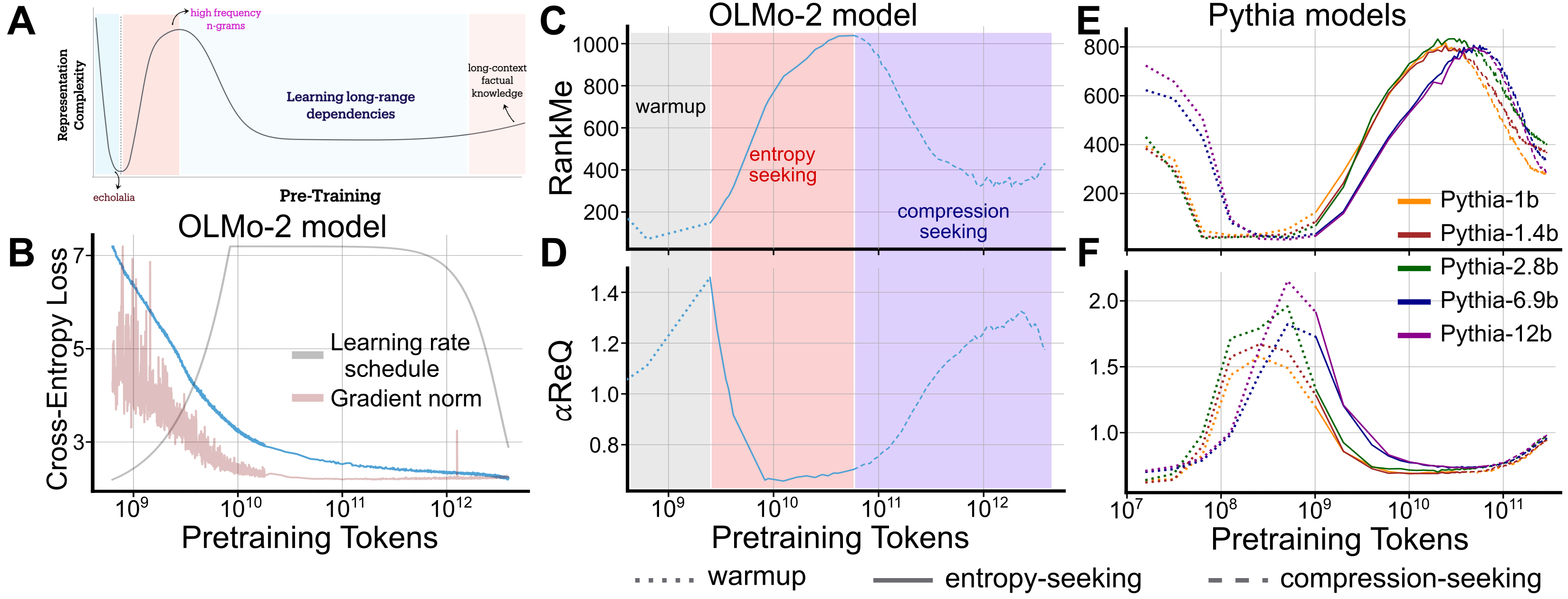}
    \caption{\textbf{Loss decreases monotonically, but representation geometry does not.} \textbf{(A)} Schematic from Fig 1, for the pretraining stage. \textbf{(B)} Cross-entropy loss, gradient norm and learning rate schedule during OLMo-2 7B model pretraining. \textbf{(C, D)} $\RankMe{}$ and $\alphaReQ$, respectively, for OLMo-2 7B model vary non-monotonically across pretraining, demonstrating three key phases: \warmup, \entropy, and \compression. \textbf{(E, F)} Same as C,D, but for Pythia models, demonstrating the consistent existence of the three phases across model families and scales.}
    \label{fig:loss_rank_alpha}
\end{figure}

\begin{itemize}
    \item \textbf{OLMo framework} \cite{groeneveld2024olmo, olmo20242, lambert2024t}: Developed by AI2, OLMo \& OLMo-2 family of models provide intermediate checkpoints across different model sizes -- 1B, 7B and 13B. We focused on intermediate checkpoints available for the OLMo-2 7B and 1B models throughout their $\sim4T$ token training run. 
    \item \textbf{Pythia suite} \cite{biderman2023pythia}: Developed by EleutherAI, this suite consists of models ranging from 70M to 12B parameters, all trained on the Pile dataset \citep{gao2020pile} using the same data ordering and hyperparameters across scales. We analyzed the intermediate checkpoints available at various intermediate training steps for 1B+ models.
    \item \textbf{T\"ulu-3.1 models} \citep{wang2024tulu3}: Developed by AI2, this suite contains instruction following $8B$ LLaMA-based models parameters, that were post-trained with state-of-the-art recipes. We analyzed checkpoints from all post-training stages of the model.
\end{itemize}

\subsection{Phases of pretraining: Non-monotonic changes in representation geometry}

\begin{figure}[t]
    \centering
    \includegraphics[width=0.8\linewidth]{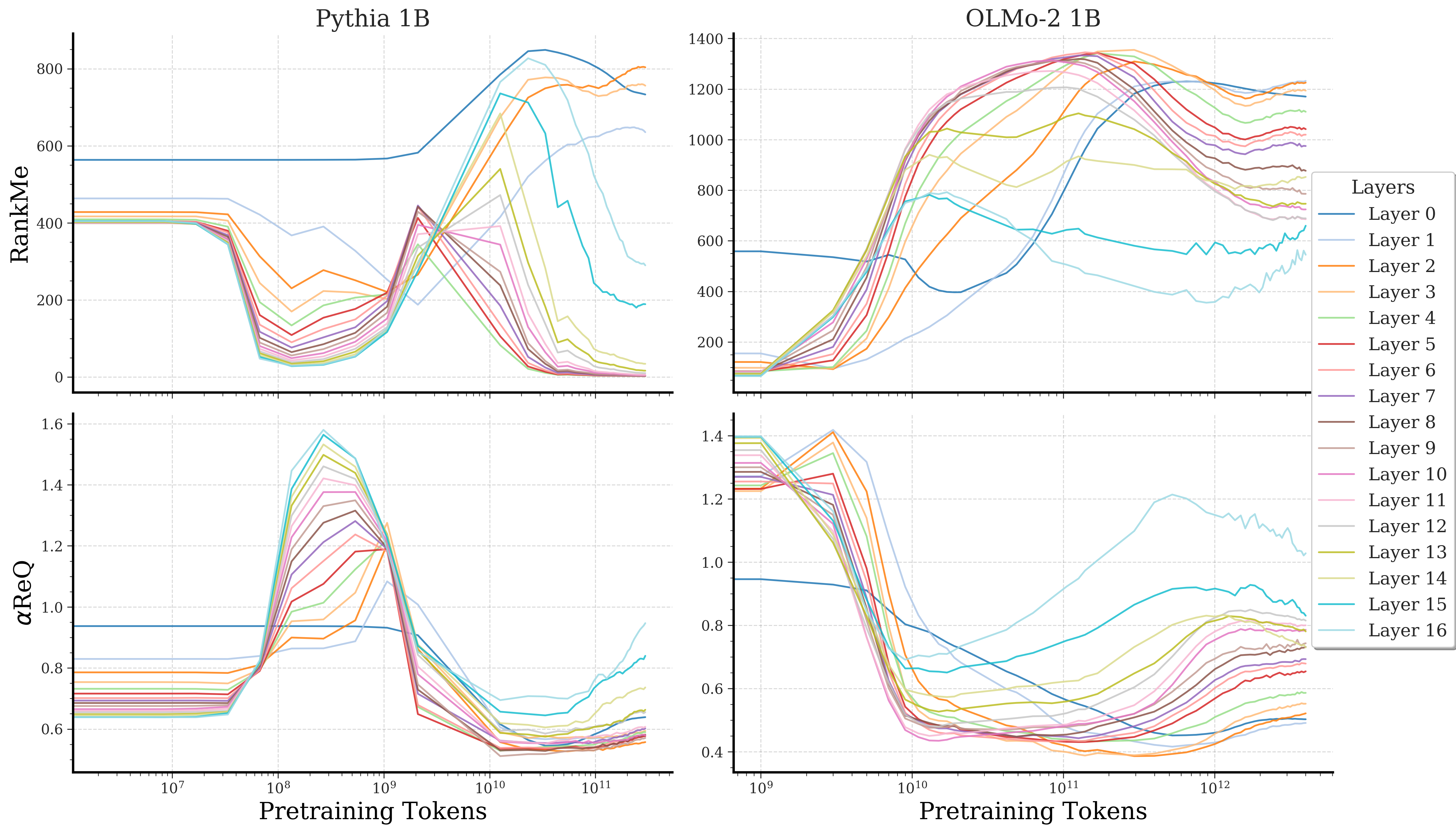}
    \caption{\textbf{Layerwise evolution mirrors the three phases.} Spectral metrics ($\RankMe{}$ and $\alphaReQ$) computed across intermediate layers during pretraining show that the three-phase pattern is consistent across network depth, justifying the use of last-layer representations for tracking global geometric dynamics. See Appendix for additional robustness analyses across samples, sequence lengths, and datasets.}
    \label{fig:robustness}
\end{figure}

During the LLM pretraining stage, standard metrics used for identifying optimization instabilities, e.g. loss or gradient norms, decrease near-monotonically. While useful to practitioners while determining successful recipes for pretraining large models, these metrics carry limited information about the model capabilities and downstream behavior. We demonstrate, on the contrary, that the high-dimensional representation geometry metrics undergo non-monotonic changes. (And, later we demonstrate that these changes correlate with downstream performance).

\Cref{fig:loss_rank_alpha} illustrates this contrasting trend between the optimization metrics and representation geometry metrics during the pretraining of aforementioned family of LLMs.
Specifically, we measured the $\RankMe{}$ \citep{garrido2023rankme} and $\alphaReQ$ \citep{agrawal2022alpha} metrics on the LLM's last layer representation of the last token while processing sequences from the FineWeb dataset \citep{penedo2024the}, and observed that there exist three distinct phases during the pretraining stage. Initially, there is a \warmup phase, coinciding with the learning rate ramp-up, exhibiting a rapid collapse of the representations along the dominant data manifold directions. This collapse manifests in repetitive, non-contextual outputs characteristic of echolalia in early checkpoints (Appendix Fig.~\ref{fig:echolalia}). This relatively short phase is followed by an \entropy phase characterized by a manifold expansion in several directions, and then a \compression phase that imposes an anisotropic consolidation of the representation space along its principal eigenvectors. We observe these phases in both OLMo2 and Pythia family of models across different model sizes, indicating the consistent nature of non-monotonic changes in representation geometry during pretraining. It is worth noting that there could be emergence of additional \entropy and \compression with more pretraining, as in later stages of OLMo-2 7B model pretraining (c.f. \Cref{fig:loss_rank_alpha}C). Notably, these phases persist even in smaller models below 1B parameters (Appendix Fig.~\ref{fig:pythia_smaller_models}), demonstrating the fundamental nature of this geometric evolution. Furthermore, as shown in \Cref{fig:robustness}, these three-phase dynamics are consistently observed across intermediate layers throughout the network depth, confirming that the geometric evolution is not confined to the final representations but reflects a global transformation of the model's representational structure.

\vspace{3mm}
\begin{mdframed}[backgroundcolor=black!3]
\textbf{Key takeaway.} Despite near-monotonic loss, representation geometry exhibits a consistent, non-monotonic phase sequence (\warmup; \entropy; \compression). These trends are stable across: (i) sample count $M$ and sequence length $L$; (ii) dataset choice within family; and (iii) layers (with last-layer sufficing for tracking), for both OLMo and Pythia at 1B+ scale.
\end{mdframed}

\subsection{Memorization \& beyond: Distributional memorization happens in entropy-seeking phase}

In this section, we seek to associate the different geometric phases to specific LLM behaviors. Downstream tasks that test the LLM's factual reasoning and language understanding abilities seem to improve with more pretraining. However, it is unclear to what extent this increase is due to an improvement in the model's memorization ability, i.e. how good is the model in ``regurgitating'' short-context phrases from the pretraining dataset, as opposed to a general language understanding, i.e. leveraging long-context dependencies to generate reasonable output. We disentangle these two factors by using the distributional memorization metric \citep{wang2025generalizationvsmemorization} presented in \cref{eq:dist_mem} for Pythia models when processing sequences from the TriviaQA dataset \citep{joshi2017triviaqa}. Notably, the $\infty$-gram model predominantly utilizes short- to medium-length suffixes (Table~\ref{tab:infini_suffix_lengths}), making it an ideal baseline for measuring short-context memorization capabilities.

\begin{figure*}[!t]
    \centering
    \includegraphics[width=0.85\textwidth]{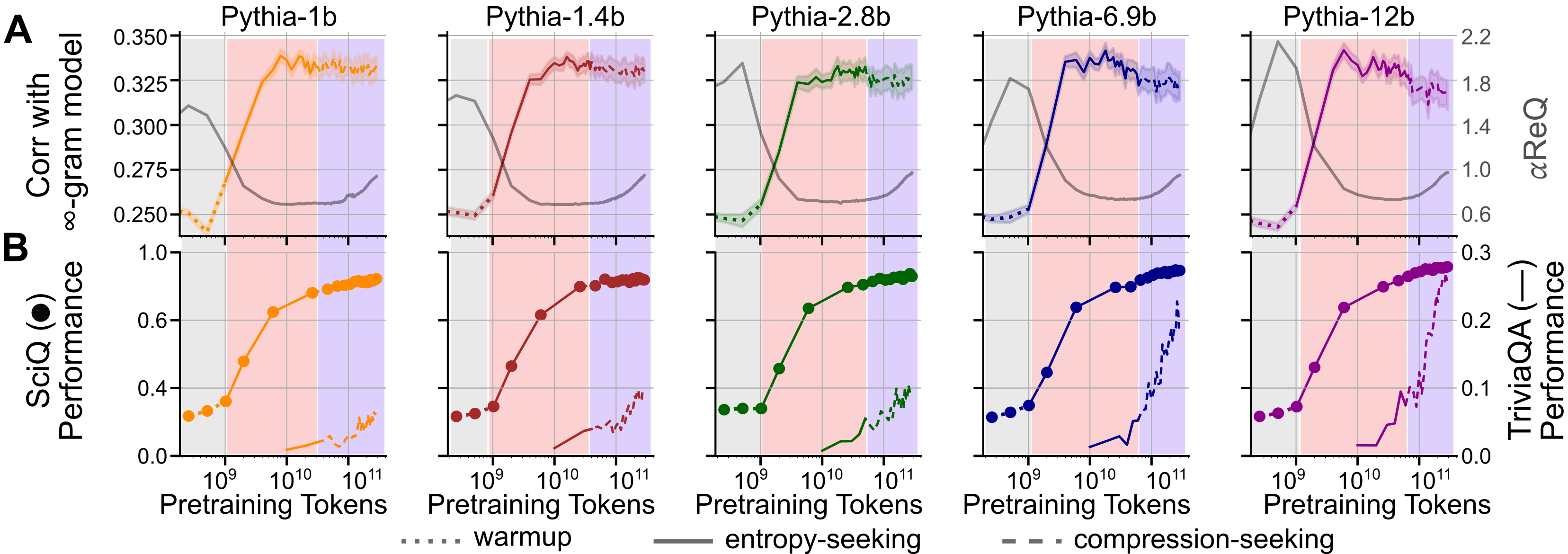}
    \caption{\textbf{Distinct learning phases are linked to different LLM capabilities. (A)} Memorization metric, i.e. spearman correlation between LLM and $\infty$-gram outputs, and representation geometry metric, $\alphaReQ$, across Pythia models' (1--12B parameters) pretraining. Memorization peaks late in the \entropy phase before plateauing or degrading slightly in the \compression phase, suggesting that the former prioritizes capturing short-context n-gram statistics. \textbf{(B)} 0-shot performance on multiple-choice (SciQ) and factual question-answering (TriviaQA) tasks across pretraining. While accuracy on SciQ benefits from learning in both phases, accuracy on TriviaQA \textit{groks} once the model learns long-context statistics, primarily in the \compression phase.}
    \label{fig:memorization}
    \vspace{-2mm}
\end{figure*}

\begin{table}[t]
\centering
\begin{minipage}[t]{0.35\textwidth}
\centering
\tablestyle{6pt}{1.1}
\begin{tabular}{cc}
\toprule
\textbf{Suffix Length} & \textbf{Frequency (\%)} \\
\midrule
$\leq 3$ & 25.41 \\
4        & 46.61 \\
5        & 16.71 \\
6        & 6.08  \\
7        & 2.40  \\
8        & 1.46  \\
$> 8$    & 1.34  \\
\bottomrule
\end{tabular}
\caption{\textbf{$\infty$-gram context in TriviaQA.} Suffix lengths reveal focus on short- to mid-context statistics.}
\label{tab:infini_suffix_lengths}
\end{minipage}
\hfill
\begin{minipage}[t]{0.60\textwidth}
\centering
\tablestyle{6pt}{1.1}
\begin{tabular}{l|c|cc|cc}
\toprule
\textbf{Model} & \textbf{Original} & \multicolumn{2}{c|}{\textbf{Top-10}} & \multicolumn{2}{c}{\textbf{Top-50}} \\
& & \textbf{Removed} & \textbf{Retained} & \textbf{Removed} & \textbf{Retained} \\
\midrule
Pythia-1B   & 0.838 & 0.849 & 0.225 & 0.835 & 0.318 \\
Pythia-1.4B & 0.866 & 0.855 & 0.232 & 0.859 & 0.324 \\
Pythia-2.8B & 0.884 & 0.880 & 0.219 & 0.873 & 0.317 \\
Pythia-6.9B & 0.896 & 0.893 & 0.202 & 0.906 & 0.327 \\
\midrule
OLMo-2-1B   & 0.953 & 0.943 & 0.199 & 0.954 & 0.326 \\
OLMo-2-7B   & 0.970 & 0.966 & 0.155 & 0.970 & 0.308 \\
\bottomrule
\end{tabular}
\caption{\textbf{Full-spectrum information is required.} Retaining only top eigen-directions markedly degrades SciQ accuracy.}
\label{tab:eigenvector_ablation}
\end{minipage}
\end{table}

\vspace{3mm}
\begin{mdframed}[backgroundcolor=black!3]
\textbf{Key takeaway.} \entropy expands utilized dimensions ($\RankMe\uparrow$, $\alphaReQ\downarrow$), aligning with increased alignment to $\infty$-gram statistics (distributional memorization). In contrast, during \compression, information is anisotropically concentrated ($\RankMe\downarrow$, $\alphaReQ\uparrow$) and long-context QA accuracy continues to improve even as memorization saturates. Together with the cross-model SciQ correlations (see Appendix Table~\ref{tab:sciq_corr}), this dissociates short-context memorization from long-context generalization and links them to distinct spectral regimes.
\end{mdframed}
\vspace{3mm}

\Cref{fig:memorization} illustrates the memorization metric and task performance over the course of pretraining for Pythia models of 5 different sizes -- ranging from 1B to 12B. Across all models, the distributional memorization metric increased during the \entropy phase and peaked towards the end of this phase. Intuitively, this result suggests that the \entropy phase is particularly important for learning short-context statistics, e.g. high-frequency n-grams, present in the pretraining corpus. This intuition is also supported by findings of Wang et al., c.f. Fig 12 \citep{wang2025generalizationvsmemorization}.
Following this peak in the memorization metric, it plateaued (or slightly decreased) during the \compression phase, suggesting that the model's output in this phase is guided by factors beyond n-gram statistics. Notably, the 0-shot accuracy on multiple-choice question-answering tasks, e.g. SciQ \citep{welbl2017crowdsourcing}, consistently improved throughout both the \entropy and \compression phases, potentially benefiting from both short- and long-context information learned in the respective phases. 

However, 0-shot performance on factual question-answering tasks, e.g. TriviaQA \citep{joshi2017triviaqa}, demonstrate a sudden and dramatic rise in accuracy closely aligned with the saturation of the memorization metric. 
Consequently, most of the improvement in task accuracy happens during the \compression phase, potentially benefiting from the long-context statistics learned in this phase, which are crucial for this task.
Taken together, these findings outline a distinct association between each phase and the emergence of different LLM capabilities: short-context n-gram modeling during the \entropy phase and long-context information aggregation during the \compression phase.

\subsection{Role of learning objective and optimization in learning dynamics}
\label{sec:learning_dynamics}



Having demonstrated the existence and salience of distinct learning phases, we now seek to understand the role of loss and optimization frameworks used in LLM pretraining in engendering these phases.
Specifically, we studied the gradient descent dynamics while optimizing the cross-entropy loss in an analytically-tractable setting --- the model $\mathrm{f}_{\theta}(x)$ is linear, i.e. $\mathrm{f}_{\theta}(x) = \theta x \in \mathcal{R}^d$, and logits are obtained (like in LLM models) as $z = W \mathrm{f}_{\theta}(x) = W \theta x \in \mathcal{R}^\mathcal{|V|}$. The outputs are obtained by applying a softmax operation on $z$ (see \Cref{fig:explaining_phases}A). We extended the results of \cite{pezeshki2021gradient} to study how $W$ and $\mathrm{f}_{\theta}(.)$ change when optimizing the loss using gradient descent. 
Notably, we found two key properties of gradient descent that contribute to the emergent geometric properties of the representation space (Appendix \S\ref{appendix:proofs} for formal statements):
\begin{itemize}
    \item \textbf{Primacy bias}: Representations and weights corresponding to high-frequency tokens are learned earlier in training, compared to low-frequency tokens.
    \item \textbf{Selection bias}: Dominant directions in the representation space are more likely to be used for encoding new information, i.e. $\Delta \sigmai{i} \propto \sigmai{i}$
\end{itemize}
We demonstrate (c.f. \Cref{fig:explaining_phases}) that two conditions are necessary (see supplementary for controls) for replicating the multiphase learning dynamics in our toy-model, as observed within LLMs: (1) non-uniform class distribution, i.e. some tokens (or classes) occur more frequently than others in the training data, and (2) information bottleneck, i.e. number of feature dimensions ($d$) is less than the vocabulary size ($\mathcal{|V|}$). Note that these two conditions are common in LLM pretraining setups.

\begin{figure}[t!]
    \centering
    \includegraphics[width=\linewidth]{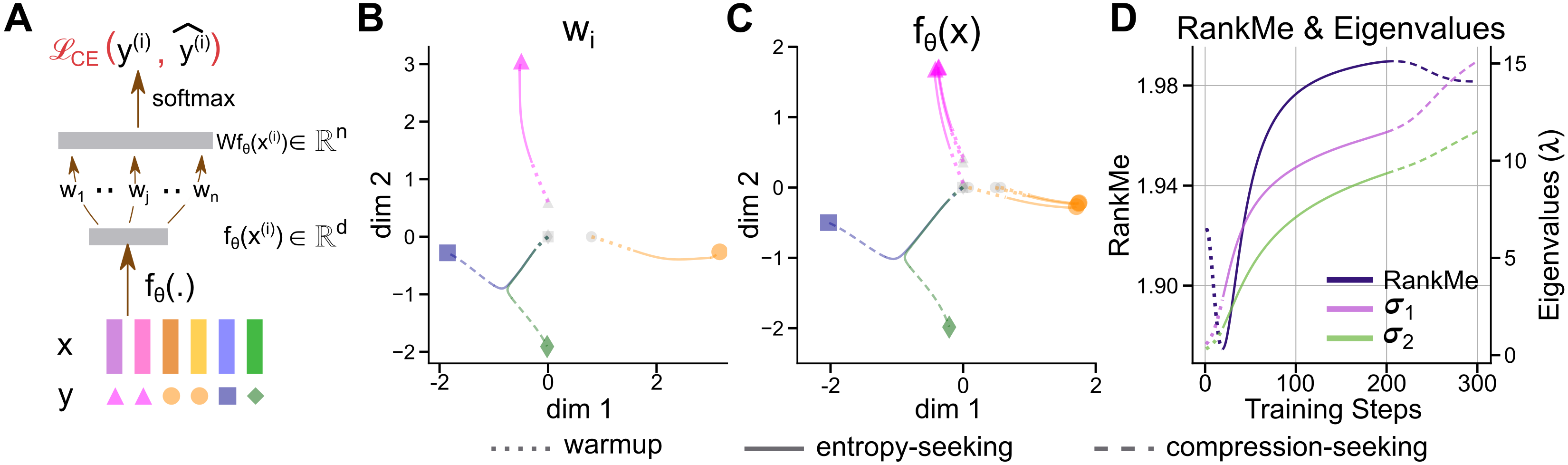}
    \caption{\textbf{Learning dynamics of cross-entropy loss replicate multiphase learning dynamics.} 
    \textbf{(A)} Schematic of a model with feature extractor $f_\theta (\in \mathbb{R}^{d})$, linear classifier $W (\in \mathbb{R}^{n \times d})$ and cross-entropy loss $\mathcal{L}_{CE}$. Skewed class distribution and information bottleneck ($d$ < $n$) are critical to replicate all three phases observed in LLM pretraining. \textbf{(B, C)} Classifier weights ($W_i$) and feature representations ($f_\theta(x)$) demonstrate distinctive trajectories analogous to \warmup (dotted), \entropy (solid), and \compression (dashed) phases. \textbf{(D)} Quantitative spectral metrics RankMe and eigenvalues, $\sigmai{1}, \sigmai{2}$.
    }
    \label{fig:explaining_phases}
    \vspace{-3mm}
\end{figure}

In the analytically tractable setup that satisfies the above conditions, we found that $\mathrm{f}_{\theta}(.)$ and $W$ for frequently-occurring classes are separated during the initial \warmup phase (\Cref{fig:explaining_phases}B \& C, dotted lines). 
The corresponding eigenvectors of the weight and feature spaces also become aligned during this phase. 
Following this initial eigenvector-alignment phase, there is an overall expansion in the representation space that leads to higher confidence predictions for frequently-occurring classes. This phase of volume expansion in the $\mathrm{f}_{\theta}(.)$ and $W$ spaces is associated with an increasing effective rank, akin to the \entropy phase (\Cref{fig:explaining_phases}B \& C, solid lines). 
Following this phase, the infrequently-occurring classes start to separate into their own clusters in both spaces (\Cref{fig:explaining_phases}B \& C, dashed lines). Constrained by the information bottleneck condition, the system resorts to reusing the feature space eigenvectors and more information is selectively encoded in the dominant direction (note $\sigma_1$ grows faster compared to $\sigma_2$ after 200 steps in \Cref{fig:explaining_phases}D). This phase of anisotropic information encoding leads to a reduction in $\RankMe{}$, akin to the \compression phase. 
Taken together, these results suggest that gradient-based cross-entropy optimization dynamics under specific training conditions may result in non-monotonic changes in representation geometry we observed in LLMs.


\noindent \textit{Controls (Appendix):} Removing skewed labels or information bottleneck eliminates \compression (\Cref{fig:mainfigure}); replacing cross-entropy with MSE yields monotonic, saturating expansion (\Cref{fig:dynamics_mse_control}).

\vspace{3mm}
\begin{mdframed}[backgroundcolor=black!3]
\textbf{Key takeaway.} Gradient descent on cross-entropy with (i) skewed token frequencies and (ii) a representation bottleneck ($d\ll|\mathcal{V}|$) suffices to produce expansion $\to$ compression via eigenvector alignment and singular-value growth proportional to magnitude. Negative controls (uniform labels / no bottleneck / MSE loss) remove \compression, isolating necessary conditions. The eigenvector ablations (Table~\ref{tab:eigenvector_ablation}) show that downstream performance depends on the \emph{full} eigenspectrum, justifying full-spectrum metrics over top-$k$ proxies.
\end{mdframed}
\vspace{3mm}

These mechanistic insights from simplified models establish fundamental principles governing representation geometry evolution. We now turn to examining how these geometric transformations manifest during post-training stages, where different optimization objectives and data distributions further sculpt the learned representations.

\subsection{Representation geometric changes during Post-Training stages}

While pretraining establishes the initial structure of LLM representations, subsequent post-training  is instrumental for refining model capabilities and aligning them with downstream objectives. Here, we investigate the geometric changes that occur during each post-training stage. Our analysis centers on the T\"ulu-3.1 models \citep{wang2024tulu3}, which utilize a sequential three-stage post-training recipe --- Supervised Fine-tuning (SFT), Direct Preference Optimization (DPO), and Reinforcement Learning with Verifiable Rewards (RLVR)  applied to the LLaMA-3.1-8B \citep{grattafiori2024llama} base model. 

\textbf{SFT exhibits \entropy:} We find that SFT is associated with a monotonic increase in the $\RankMe{}$, indicating an increase in the underlying representation manifold complexity. See also detailed ID/OOD loss and win-rate behavior in \Cref{fig:OLMo2_1B_SFT_detailed} (Appendix).
We hypothesize that the manifold expansion is related to instruction memorization on in-distribution (ID) examples, while reducing robustness to out-of-distribution (OOD) samples. To test this, we perform SFT with Anthropic-HH dataset on OLMo2-1B intermediate checkpoints. As shown in \Cref{fig:posttraining}
B, we find that with more pretraining the ID loss on Anthropic-HH improves monotonically, while the OOD loss (on Alpaca farm data) increases. 
To understand the role of base-model geometry on the generalization gap, we perform SFT on Anthropic-HH (AH) and Alpaca farm (AF) datasets across checkpoints of OLMo2-1B, and measure chat winrates for AH using AF as reference on the AlpacaEval dataset. Strikingly, we find ( \Cref{fig:posttraining}B bottom) that while more pretraining coincides with an increase in $\RankMe{}$, the winrates decrease for AH. Notably, a drop in winrate from 14\% to 9\% suggests that the LLM judge is better able to distinguish between the outputs of the two instruction-tuned models. This reinforces that "overtrained" base models are more sensitive to distribution shifts under SFT.

\begin{figure}[!t]
    \centering
    \includegraphics[width=\linewidth]{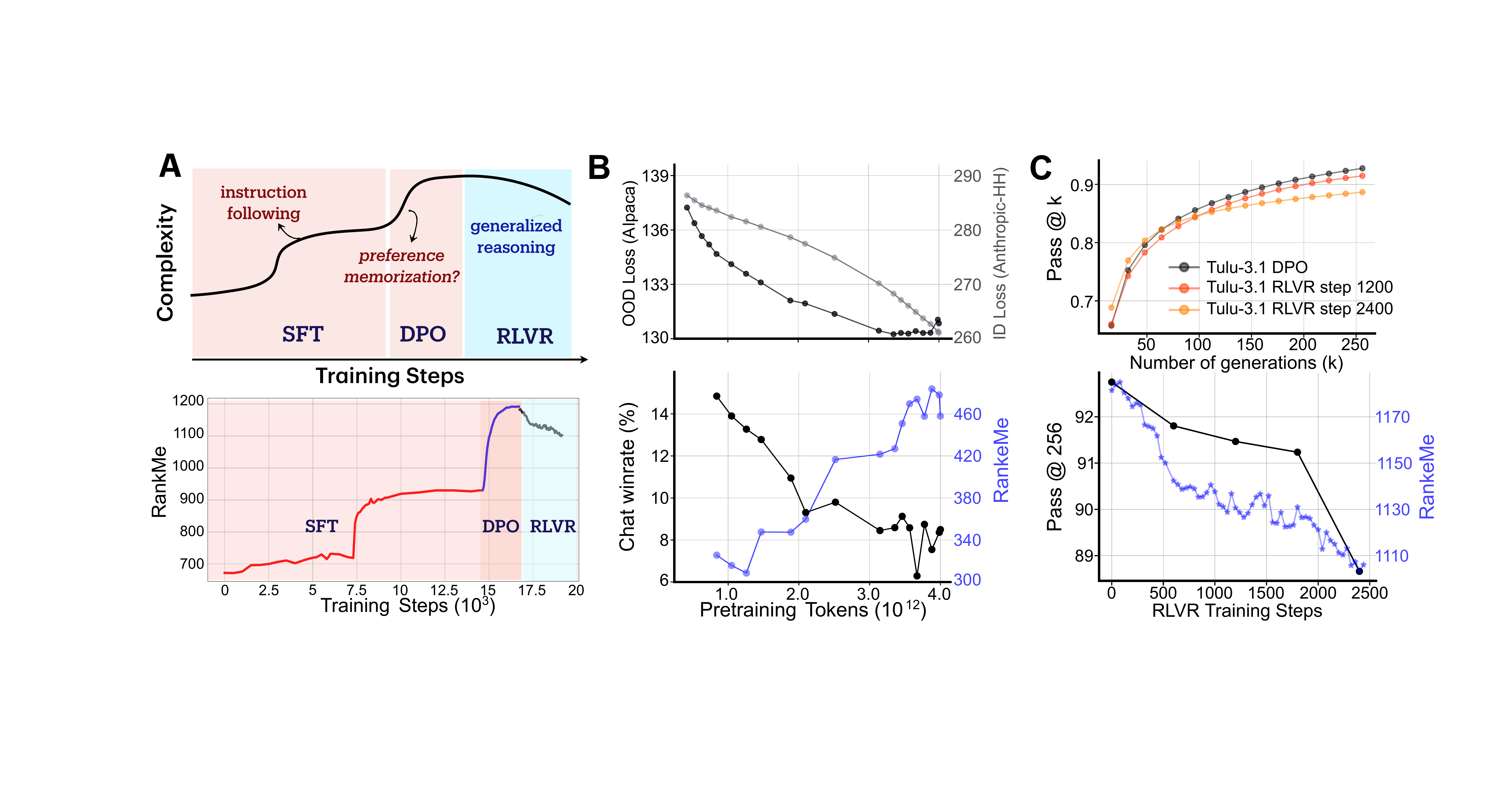}
    \caption{
    \textbf{Post-training induces distinct geometric transformations in model representations, aligned with specific behavioral changes.}
    \textbf{(A)} Conceptual overview of post-training (SFT, DPO and RLVR) {\bf (top)}, corresponding $RankMe$ metrics from intermediate checkpoints of Llama-3.1-T\"ulu-3.1-8B {\bf (bottom)} highlighting distinct progression for each stage.
    \textbf{(B)} Impact of pretraining on OLMo-2-1B SFT (Anthropic-HH): {\bf (top)} longer pretraining improves in-distribution (ID) performance, while out-of-distribution (OOD) generalization (Alpaca farm) saturates {\bf (bottom)} \textit{Overtrained} models with higher $\RankMe{}$ exhibit markedly distinct outputs on AlpacaEval after undergoing SFT on two different datasets (Anthropic-HH and Alpaca farm).
    \textbf{(C)} RLVR post-training narrows base model's (Llama-3.1-8B-T\"ulu-3-DPO) exploratory behavior on AMC-23 (particularly at higher sampling counts e.g. $k=256$), suggesting higher effective-rank facilitates better search.
    \vspace{-3mm}
    }
    \label{fig:posttraining}
\end{figure}

\textbf{DPO exhibits \entropy:} Prior works in self-supervised vision pretraining \citep{zhai2024understanding,ghosh2024harnessing} have established that contrastive learning objectives, e.g. SimCLR, are associated with an increase in representation complexity, as the network progressively learns the relevant eigenfunctions \citep{simon2023stepwise} to separate the \textcolor{RoyalBlue}{positive} and \textcolor{OrangeRed}{negative} examples. We observe a similar trend in the DPO stage, notably a monotonic increase (decrease) in the $\RankMe{}$ ($\alphaReQ$), c.f. \cref{fig:posttraining}A. 
This parallel between the two settings can be attributed to the analogous formulations in the objective function. Note below that \cref{eq:dpo_loss} can be written as the Noise Contrastive Estimation (NCE) loss \citep{gutmann2010noise}, often used in contrastive vision and multimodal pretraining \citep{oord2018representation,chen2020simple,radford2021learning}, with \textit{one} \textcolor{OrangeRed}{negative} example.
\begin{equation}
    \mathcal{L}_{DPO} = -\mathbb{E}_{x, y_w, y_l} \left[ \log(\sigma(\hat{r}_\theta(x, y_w) - \hat{r}_\theta(x, y_l))) \right] = -\mathbb{E}_{x, y_w, y_l} \left[ \log\frac{ \textcolor{RoyalBlue}{e^{\hat{r}_\theta(x, y_w)}}}{\textcolor{RoyalBlue}{e^{\hat{r}_\theta(x, y_w)}} + \textcolor{OrangeRed}{e^{\hat{r}_\theta(x, y_l)}}} \right] 
    \label{eq:dpo_as_contrastive}
\end{equation}

\textbf{RLVR exhibits \compression:} In sharp contrast to SFT and DPO, we observe that RLVR is associated with a monotonic decrease in $\RankMe{}$ (cf. \Cref{fig:posttraining}A). To probe the implications of this \compression stage, we evaluate the unbiased \texttt{pass@k} performance on AMC-23 math benchmark. 
\Cref{fig:posttraining}C shows that while RLVR-training for 2400 steps outperforms the base (post-DPO) model at \texttt{pass@16}, the base model as well as an intermediate checkpoints outperform the RLVR-trained model at \texttt{pass@256}. 
This decline in \texttt{pass@256} performance as training progresses, reinforces prior work \citep{yue2025does} suggesting that RLVR constraints the exploratory behavior of base models while amplifying some pre-existing behaviors of the base model \citep{zhao2025echo}.
\vspace{-2mm}

\vspace{3mm}
\begin{mdframed}[backgroundcolor=black!3]
\textbf{Key takeaway.} Post-training induces mirrored spectral transformations with practical trade-offs: SFT/DPO ($\RankMe\uparrow$, $\alphaReQ\downarrow$) enhance in-distribution fit but increase sensitivity to dataset idiosyncrasies; RLVR ($\RankMe\downarrow$, $\alphaReQ\uparrow$) consolidates reward-aligned behaviors and narrows high-$k$ exploration (pass@$k$), consistent with reduced solution diversity.
\end{mdframed}
\vspace{3mm}

\section{Related Work}
\vspace{-2mm}
{\bf Dynamics of Knowledge Acquisition and Representation Learning} A central theme in understanding neural networks is that learning is a dynamic, often phased process rather than a monolithic one. Recent work by \citep{zucchet2025language} identified distinct stages in how LLMs learn factual information, highlighting the formation of critical internal structures like attention circuits during performance plateaus. This notion of staged learning is further supported by the "Distributional Simplicity Bias" (DSB) established by \citep{refinetti2023neural, belrose2024neural}, which posits that networks learn simpler statistical properties of data (e.g., lower-order moments) before more complex ones. Our work provides a geometric lens on these phenomena, using spectral measures to track how the effective dimensionality of representations evolve non-monotonically. Furthermore, \citep{michaud2023quantization} proposed that scaling laws and emergent abilities arise from learning discrete "quanta" of knowledge. \citep{demoss2024complexity} explained grokking \citep{power2022grokking} as a transition from high-complexity memorization to low-complexity generalization, measured via algorithmic information theory. Our spectral geometric phases offer a complementary perspective that could underpin these observed emergent jumps in performance and the dynamics of grokking.

{\bf Post-Training Alignment and Reasoning} The adaptation of pretrained LLMs through fine-tuning is critical for aligning them with specific tasks and user preferences. \citep{ren2024learning} provided an empirical-NTK based framework to decompose the influence of fine-tuning updates, explaining complex behaviors such as hallucination amplification in SFT and the "squeezing effect" in DPO, where confidence in desired outputs can paradoxically decrease. Concurrently, \citep{springer2025overtrained} identified "catastrophic overtraining," showing that excessive pretraining can make models overly sensitive to parameter changes, thereby degrading performance after SFT. Our work contributes to this area by demonstrating that different post-training strategies (SFT, DPO, RLVR) induce distinct transformations in the geometry and it's influence model capabilities. 

\section{Discussions}
\label{sec:discussions}
\vspace{-2mm}


{\bf Geometry of Pretraining: Memorization vs Generalization.} We show that LLM pretraining is multiphasic rather than monotonic, characterized mainly by \entropy and \compression phases. The observed geometric phases provide a quantitative framework for examining the relationship between memorizing short-context statistics and generalizing long-context information. The \entropy phase expands the representational space to capture various short-context patterns, including n-gram memorization. Conversely, the \compression phase promotes a more structured manifold and is likely to incentivize generalizable long-range language understanding. This geometric refinement process is consistent with and may offer an explanation for phenomena like \textit{grokking}, where generalization capabilities can emerge after an initial period of fitting.

Our preliminary analysis further reveals the importance of full-spectrum information for model performance. When we ablate eigenvectors to retain only the top-k principal components, SciQ accuracy degrades dramatically (Table~\ref{tab:eigenvector_ablation}). For instance, retaining only the top 10 eigen-directions reduces Pythia-1B's accuracy from 0.838 to 0.225, while OLMo-2-7B drops from 0.970 to 0.155. Interestingly, removing the top eigen-directions has minimal impact, suggesting that information is distributed across the full spectrum rather than concentrated solely in dominant directions. This finding validates our use of full-spectrum metrics like $\RankMe{}$ and $\alphaReQ$ rather than top-k proxies, and underscores that effective language understanding requires the entire representational manifold—not just its principal components. The necessity of preserving full spectral information aligns with the \compression phase's anisotropic consolidation, which selectively strengthens certain directions while maintaining distributed representations across the manifold.



{\bf \noindent Geometry of Post-Training: Alignment vs Exploration.} Different post-training recipes induce distinct shifts in LLM representation geometry, explaining the model's behavioral changes. Supervised Fine-Tuning (SFT) drives an \entropy dynamic, expanding the representational manifold for specific instruction-response examples. This manifold expansion can be seen as evidence for the lazy-regime learning described by \citet{ren2024learning} during SFT, and points to a near-diagonal empirical NTK that results in an instance-level learning dynamics. Consequently, this dynamic improves in-distribution performance but risks overfitting due to higher representational capacity. In contrast, Reinforcement Learning from Verifiable Rewards (RLVR) promotes a \compression dynamic, refining representations towards reward-aligned directions. This geometric compression may explain how RLVR amplifies and refines existing capabilities, as observed by \citet{zhao2025echo}, potentially by constraining representations to a more structured subspace while reducing its exploration ability, as shown by \citet{yue2025does}. In summary, SFT/DPO-induced rank expansion may foster preference memorization and exploratory behavior, while RLVR-induced consolidation amplifies model-capabilties towards reward-oriented, less diverse generation (c.f. \Cref{fig:posttraining}C).

{\bf Limitations and Future Work}
Tracing a model's geometry, whether \entropy or \compression, could inform more effective interventions for LLM development and evaluation, such as the selection of optimal pretraining checkpoints for targeted fine-tuning or designing training strategies that deliberately navigate these geometric phases. Our findings have several limitations: (i) computational constraints limited our analysis to models up to 12B parameters, though the phases persist across scales from 160M to 12B; (ii) spectral metric computation requires $\sim$10K samples and scales quadratically with hidden dimension (iii) our theoretical analysis assumes simplified linear feature extractors, leaving the extension to full transformer architectures as future work; (iv) we focused on English-language models trained with standard objectives, and whether similar phases emerge in multilingual or alternatively-trained models remains unexplored. Furthermore, our findings are primarily correlational; establishing causal connections between geometric dynamics and emergent capabilities requires additional investigation.


\section{Conclusion}
We show that LLMs undergo non-monotonic representation geometry changes, often masked by steadily decreasing training loss. By employing spectral metrics of feature covariates ($\RankMe{}$ and $\alphaReQ$), we delineate three distinct pretraining phases: \warmup, \entropy (correlating with n-gram memorization), and \compression (correlating with long-context generalization). We further demonstrate that post-training recipes induce specific geometric changes: SFT/DPO exhibit \entropy dynamics, whereas RLVR exhibit \compression dynamics. These results provide a quantitative framework for guiding future advancements in LLM development.

{\bf Impact Statement} The goal of our work is to advance the understanding of internal representations of LLMs. Although there are potential downstream societal consequences of this technology, we feel there
are no direct consequences that must be specifically highlighted here.

\section*{Acknowledgments}
The authors would like to thank Koustuv Sinha for insightful discussions that helped shape the scope of the project and Jacob Mitchell Springer for helping setup the OLMo-2 supervised finetuning pipeline. We are grateful to the OLMo team, particularly Nathan Lambert, Dirk Groeneveld, and Bailey Kuehl, for providing access to the OLMo-2 checkpoints (especially OLMo-2-1B) that enabled this research.
The authors are also grateful to Daniel Levenstein, Johannes von Oswald, Jonathan Cornford, Mandana Samiei, Tejas Vaidhya, and Zahraa Chorghay for their comments and feedback. A.G. was supported by Vanier Canada Graduate Scholarship. 
G.L. was supported by NSERC (Discovery Grant RGPIN2018-04821), the Canada Research Chair in Neural Computations and Interfacing, CIFAR (Canada AI Chair), as well as IVADO and the Canada First Research Excellence Fund. 
B.A.R. was supported by NSERC (Discovery Grant: RGPIN-2020-05105; Discovery Accelerator Supplement: RGPAS-2020-00031) and CIFAR (Canada AI Chair; Learning in Machines and Brains Fellowship). 
The authors also acknowledge the material support of NVIDIA
in the form of computational resources, as well as the compute resources, software and technical help provided by Mila (mila.quebec).

\newpage
\bibliographystyle{plainnat}
\bibliography{references}

\clearpage
\appendix
\section{Model and Dataset Details}
\label{appendix:model_dataset}
\subsection{Model Architecture and training configurations}


\begin{table}[!ht]
\centering
\caption{Comparison of model architectures and training setups.}
\begin{tabular}{|l|l|l|l|}
\hline
 & \textbf{Pythia} & \textbf{OLMo-2} & \textbf{GPT-2} \\
\hline
\textbf{Position Embedding} & Learned & Rotary (RoPE) & Learned \\
\textbf{Norm Type} & LayerNorm & RMSNorm & LayerNorm \\
\textbf{Norm Position} & Pre-layer & Pre-layer & Pre-layer \\
\textbf{Dataset} & The Pile (825 GB) & OLMoStack (4T tokens) & Fineweb (10BT) \\
\textbf{Optimizer} & AdamW & AdamW & AdamW \\
\textbf{LR Scheduler} & Cosine decay & Linear decay w/ warmup & Cosine decay \\
\textbf{Loss Function} & Cross-Entropy & Cross-Entropy & Cross-Entropy \\
\hline
\end{tabular}
\label{tab:model_comparison}
\end{table}

\begin{table}[!ht]
\centering
\caption{Tülu Model Architecture and Training Setup}
\begin{tabular}{|l|l|}
\hline
\textbf{Component} & \textbf{Tülu} \\
\hline
\textbf{Base Models} & Llama 3 base models \\
\textbf{Position Embedding} & Inherited from base model \\
\textbf{Normalization Type} & Inherited from base model (LayerNorm) \\
\textbf{Normalization Position} & Pre-layer \\
\textbf{Instruction Datasets} & Tulu3 Mixture(FLAN V2, OpenAssistant, WildChat GPT-4) \\
\textbf{Training Techniques} & SFT, DPO, RLVR \\
\textbf{Optimizer} & AdamW \\
\textbf{Learning Rate Scheduler} & Linear decay with warmup \\
\textbf{Loss Function} & Cross-Entropy \\
\hline
\end{tabular}
\label{tab:tulu_model_comparison}
\end{table}

\subsection{Dataset Details}

In this section, we provide an overview of the datasets used in our experiments.

\textbf{FineWeb}:
The FineWeb dataset \citep{penedo2024the} consists of more than 15T tokens of cleaned and deduplicated english text obtained from the web using CommonCrawl. While the full dataset contains 15T tokens, we use the smallest subset, i.e. a subsampled version of the dataset consisting of 10B tokens. The dataset is accessible on HuggingFace at \href{https://huggingface.co/datasets/HuggingFaceFW/fineweb}{https://huggingface.co/datasets/HuggingFaceFW/fineweb}.

\textbf{WikiText}: 
The Wikitext dataset \citep{merity2016pointer} is a collection of over 100 million tokens extracted from the set of verified Good and Featured articles on Wikipedia. We use only a subset of the dataset to perform early evaluations of \texttt{RankMe} and $\alpha$ReQ, before running our final experiments using FineWeb.

\textbf{SciQ}:
The SciQ dataset \citep{welbl2017crowdsourcing} contains over 13K crowdsourced science exam questions about physics, chemistry and biology, among many others. The questions are in multiple-choice format with 4 answer options each. The dataset is accessible on HuggingFace at \href{https://huggingface.co/datasets/allenai/sciq}{https://huggingface.co/datasets/allenai/sciq}.

\textbf{TriviaQA}:
The TriviaQA dataset \citep{joshi2017triviaqa} is a reading comprehension dataset containing over 650K question-answer-evidence triples. We use the TriviaQA dataset to evaluate a model's ability to incorporate long-context information from the question in order to correctly answer it. The dataset is accessible on HuggingFace at \href{https://huggingface.co/datasets/mandarjoshi/trivia_qa}{https://huggingface.co/datasets/mandarjoshi/trivia\_qa}.

\textbf{LAMBADA OpenAI}:
This dataset \citep{radford2019language} is comprised of the LAMBADA test split, pre-processed by OpenAI, and contains machine translated versions of the split in German, Spanish, French and Italian. We use this dataset to evaluate the model's text understanding capabilities. The dataset is accessible on HuggingFace at \href{https://huggingface.co/datasets/EleutherAI/lambada_openai}{https://huggingface.co/datasets/EleutherAI/lambada\_openai}.

\textbf{Anthropic Helpful-Harmless (HH)}:
The Anthropic-HH dataset provides human preference data about helpfulness and harmlessness, and is meant to be used for training preference models in a Reinforcement Learning with Human Feedback (RLHF) setting. However, we use a variant of this dataset for SFT. Specifically, we generate a human-assistant chat dataset of $\sim 161K$ samples by parsing the ``chosen'' responses for each instruction from the original dataset and using it to finetune a base model by treating the ``chosen'' response as the target (similar to \citep{springer2025overtrained}). While such a use of this dataset is discouraged in practical settings, we use this modified dataset as a testbed for our SFT experiments. The original dataset is accessible on HuggingFace at \href{https://huggingface.co/datasets/Anthropic/hh-rlhf}{https://huggingface.co/datasets/Anthropic/hh-rlhf}.

\textbf{AlpacaFarm Human-ANN chat (AlpacaFarm)}:
This dataset is created by following a similar procedure as mentioned above for the Anthropic-HH dataset, but for the Human Evaluation dataset of the AlpacaFarm evaluation set \citep{dubois2023alpacafarm}. As a result, this dataset consists of $\sim 17.7K$ samples, and is used as a positive control in our SFT experiments. Models that are finetuned on this dataset are expected to perform well on the AlpacaEval chat task (see below), compared to models that are finetuned on a different dataset. This positive control is essential to disentangle the in-distribution vs out-of-distribution abilities of a SFT-model. The original dataset is accessible on HuggingFace at \href{https://huggingface.co/datasets/tatsu-lab/alpaca_farm}{https://huggingface.co/datasets/tatsu-lab/alpaca\_farm}.

\textbf{AlpacaEval}: 
AlpacaEval is an LLM-based automatic evaluation setup for comparing chat models in a fast, cheap and replicable setting. We use AlpacaEval as a test bench to study the behavior of models after undergoing SFT. Models that are finetuned on the AlpacaFarm dataset are expected to produce better chat models and generate responses more aligned to human-preferred responses to instructions in the AlpacaEval setup. We defer the reader to the corresponding \href{https://github.com/tatsu-lab/alpaca_eval}{github repository} for further details of the evaluation setup.

\textbf{AMC23}:
The AMC23 benchmark refers to a specific set of evaluations based on the American Mathematics Competitions. This benchmark is designed to assess the mathematical reasoning capabilities of advanced AI models using problems characteristic of the AMC series. For the evaluation of AMC23, we utilize the resources and methodologies found in the Qwen2.5-Math repository. This repository is accessible at \url{https://github.com/QwenLM/Qwen2.5-Math} and provides the framework for our assessment process.

\subsection{Compute and hyperparameter configuration details}

\textbf{Compute resources:} All of our LLM inference experiments were run either on a single 80GB A100 or a 40GB L40S GPU. The finetuning experiments (SFT and DPO) were run on a single node consisting of 4 A100 GPUs.

\begin{table}[!ht]
\centering
\renewcommand{\arraystretch}{1.3}
\begin{tabular}{|l|p{6cm}|}
\hline
\textbf{Hyperparameter} & \textbf{Value} \\
\hline
Dataset & \href{https://huggingface.co/datasets/HuggingFaceFW/fineweb}{FineWeb} sample-10BT \\
\hline
Max sequence length & 512 \\
\hline
Number of sequences & 15000 \\
\hline
Batch size & 16 \\
\hline
\end{tabular}
\caption{Hyperparameter configurations used for computing \texttt{RankMe} and $\alpha$ReQ in \Cref{fig:loss_rank_alpha}.}
\end{table}

\begin{table}[!ht]
\centering
\renewcommand{\arraystretch}{1.3}
\begin{tabular}{|l|p{6cm}|}
\hline
\textbf{Hyperparameter} & \textbf{Value} \\
\hline
SFT dataset & Anthropic-HH or AlpacaFarm \\
& Human-ANN chat (train split) \\
\hline
Max sequence length & 4096 \\
\hline
Batch size & 16 \\
\hline
Gradient accumulation steps & 16 \\
\hline
Learning rate & 1e-5 \\
\hline
Learning rate schedule & Linear decay with 10\% warmup \\
\hline
Number of epochs & 2 \\
\hline
Loss reduction & sum \\
\hline
Seeds & 0, 7, 8, 42, 420 \\
\hline
\end{tabular}
\caption{Hyperparameter configurations used for Supervised FineTuning (SFT).}
\end{table}

\begin{table}[!ht]
\centering
\renewcommand{\arraystretch}{1.3}
\begin{tabular}{|l|p{6cm}|}
\hline
\textbf{Hyperparameter} & \textbf{Value} \\
\hline
Base model & OLMo2-1B \\
\hline
In-distribution dataset & Anthropic-HH (test split) \\
\hline
Out-of-distribution dataset & AlpacaFarm Human-ANN chat (train split) \\
\hline
Max sequence length & 1024 \\
\hline
Number of sequences & 10000 \\
\hline
Batch size & 32 \\
\hline
\end{tabular}
\caption{Hyperparameter configurations used for ID and OOD loss eval.}
\end{table}

\begin{table}[!ht]
\centering
\renewcommand{\arraystretch}{1.3}
\begin{tabular}{|l|p{6cm}|}
\hline
\textbf{Hyperparameter} & \textbf{Value} \\
\hline
Base model & OLMo2-1B \\
\hline
Dataset & AlpacaEval (test split) \\
\hline
Max new tokens & 1024 \\
\hline
LM judge & \href{https://huggingface.co/CohereLabs/c4ai-command-a-03-2025}{Cohere Command A} \\
\hline
\end{tabular}
\caption{Hyperparameter configurations used for chat winrate on AlpacaEval.}
\end{table}

\subsection{Reproducing T\"ulu-3-8B SFT and DPO}
We follow instructions from [\href{https://github.com/allenai/open-instruct}{https://github.com/allenai/open-instruct}] for reproducing and gathering the intermediate stage checkpoints (both for SFT and DPO) without changing any hyperparamters.

\clearpage

\section{Gradient Descent and Cross-entropy theory}
\label{appendix:proofs}



\subsection{Setup}


Let $s$ denote an individual instance (or input token sequence of language), with its true class identity (or the next token's index in vocabulary) given by $y(s)$. An (LLM) encoder, parameterized by $\theta$, processes $s$ to produce its contextualized embedding, $\fo(s) \in \mathbb{R}^{d}$, where $d$ is the embedding dimension. For a batch, $S$, of $b$ such instances, the encoder outputs a matrix $\fo(S) \in \mathbb{R}^{b \times d}$. Subsequently, predictions $\bhy \in \mathbb{R}^{b \times |\mathcal{V}|}$ are generated by multiplying this batch embedding with a weight matrix $W \in \mathbb{R}^{d \times |\mathcal{V}|}$, where $|\mathcal{V}|$ is the vocabulary size (or number of classes):
\[
    \bhy = \fo(S) W
\]

To simplify the setting such that it is analytically-tractable, we assume the embedding function $\fo(s)$ to be modeled as a linear transformation of the input, i.e. $\fo(s) = \theta^T s$, where $\theta \in \mathbb{R}^{d_{in} \times d}$ is a parameter matrix. For a batch, of $b$ such instances, represented as a matrix $S \in \mathbb{R}^{b \times d_{in}}$ whose row vectors are orthonormal (i.e., $ SS^T = \mathbf{I}$). The batch embedding is, therefore, $\fo(S) = S \theta \in \mathbb{R}^{b \times d}$. Here, $d_{in}$ is the input feature dimension and $d$ is the embedding dimension.

\textbf{Note:} By imposing this i.i.d. assumption, we ensure that learning on one sample does not change the output of another sample, i.e. no inter-sample interference. While we admit that this assumption is unrealistic, and learning to predict the next token of one sequence in an autoregressive setup affects the output of another sequence, we believe that this assumption enables a first step towards understanding the implicit effect of cross-entropy loss optimization using gradient descent. We note that our results do not strictly depend on this assumption, and can be extended to non-i.i.d. samples. We leave this to future work.

\textbf{Note 2:} Note that our assumption of a linear embedding model, $\fo$, is also an aberration from the transformer-based LLMs. However, we focus on the effect of loss and optimization in this section and leave further investigation into the implicit bias of architecture to future studies.

\subsection{Linear approximation of Cross-entropy loss: Legendre Transform}
Let us start by defining the cross-entropy loss for one example, $s$, which belongs to class $c$ as:
\begin{equation}
\LCE(s) = - \log\left(\frac{e^{\hy_c}}{\sum_j e^{\hy_j}}\right) = -\hy_c + \log\left(\sum_j e^{\hy_j}\right)
\label{eq:xent_loss}
\end{equation}

Note that \cref{eq:xent_loss} is nonlinear in $\bhy$, thereby making it harder to analyze the dynamical system in the parameter space that is imposed by gradient descent. In order to arrive at an analytical understanding of the gradient-induced dynamics when optimizing \cref{eq:xent_loss}, we will do a linear approximation of $\LCE$ using Legendre Transform, similar to \citep{pezeshki2021gradient}. Specifically, we will derive the Legendre transform of the nonlinear term, $\log(\sum_j e^{\hy_j})$. 

Intuitively, we want to approximate \cref{eq:xent_loss} such that it changes linearly with changes in $\bhy$. The key motivation for using Legendre transform is to ignore the second (and higher) order effects of a ``small'' change in $\bhy$ on $\LCE$. Formally, we want the following:
\begin{equation}
    \hLCE(s) = -\hy_c + \alpha^T \bhy + g(\alpha) 
    \label{eq:legendre_l}
\end{equation}
where $\alpha$ is the slope of the nonlinear term at $\bx = \bhy$.
\begin{align}
    \alpha &= \nabla_{\bx} log(\sum_j e^{x_j}) \Big|_{\bx=\bhy}\nonumber\\
    \implies \alpha_i &= \pderiv{x_i} \log(\sum_j e^{x_j}) \Big|_{x_j=\hy_j} = \frac{e^{\hy_i}}{\sum_j e^{\hy_j}}
    \label{eq:legendre_alpha}
\end{align}
Note that: \boxed{\sum_i \alpha_i = 1}

To simplify things, let us denote $C = \sum_j e^{\hy_j}$. Substituting this in \cref{eq:legendre_alpha}, $\hy_i = log(\alpha_i) + log(C)$.

Now we need to find $g(\alpha)$ such that $f(\bhy) = \alpha^T \bhy + g(\alpha)$.
\begin{align}
    g(\alpha) &= f(\bhy) - \alpha^T \bhy = \log(\sum_j e^{\hy_j}) - \alpha^t \bhy = \log(\sum_j e^{\hy_j}) - \sum_j \alpha_j \hy_j \nonumber \\
    &= log(C) - \sum_j \alpha_j log(\alpha_j) - \sum_j \alpha_j log(C) \nonumber \\
    &= log(C) - \sum_j \alpha_j log(\alpha_j) - log(C) \nonumber \quad \quad \quad \text{[Using $\sum_j \alpha_j = 1$]} \\
    &= - \sum_j \alpha_j log(\alpha_j) = H(\alpha)
    \label{eq:legendre_g}
\end{align}
where $H(\alpha)$ denotes the Shannon entropy of a probability distribution defined by $\alpha_i$'s.

Substituting \cref{eq:legendre_g} in \cref{eq:legendre_l}, we get the expression for the linearized cross-entropy loss:
\begin{equation}
    \hLCE(s) = -\hy_c + \alpha^T \bhy + H(\alpha) 
    \label{eq:legendre_loss}
\end{equation}

\subsection{Gradient descent dynamics of linearized cross-entropy loss}
\begin{align}
    \hLCE(s) &= -\hy_c + \alpha^T \bhy + H(\alpha) = -\hy_c + \sum_j \alpha_j \hy_j  + H(\alpha) \nonumber \\
    &= - \fo(s)^T w_c + \sum_j \alpha_j \fo(s)^T w_j + H(\alpha) \nonumber \\
    \nabla_{\fo(s)} \hLCE(s) &= - w_c + \sum_j \alpha_j w_j = \sum_j (\alpha_j - \delta_{j=c}) w_j \nonumber \\
    \nabla_{w_i} \hLCE(s) &= - \fo(s) \delta_{i=c} + \alpha_i \fo(s) = (\alpha_i - \delta_{i=c}) \fo(s) \nonumber 
\end{align}
where $\delta_{(.)}$ is the Dirac-delta function, i.e. its value is 1 when the condition in subscript is true and 0 otherwise. 

Denoting $\ta_i = (\alpha_i - \delta_{i=c})$, we arrive at the gradient equations for $\fo(s)$ and $w_i$'s:
\begin{equation}
    \nabla_{\fo(s)} \hLCE(s) = \sum_j \ta_j w_j \quad, \quad \nabla_{w_i} \hLCE(s) = \ta_i  \fo(s) 
    \label{eq:grad_one_sample}
\end{equation}

We can easily extend \cref{eq:grad_one_sample} to multiple examples $\{ s_1, s_2 \cdots s_b \}$ and write the gradient descent update (using learning rate $\eta$) equations as:
\begin{align}
    \dot{\fo}(s_j) = -\eta \sum_i \ta_i(s_j) w_i \quad &, \quad  \dot{w}_i = - \eta \sum_j \ta_i(s_j) \fo(s_j) \nonumber \\
    \implies \dot{\fo} = - \eta A W^T \quad &, \quad \dot{W} = - \eta \fo^T A
\end{align}
where
\[
A_{ij} = \begin{cases} \alpha_j(s_i) - 1 & \text{if } c_i = j \\ \alpha_j(s_i) & \text{else} \end{cases} \quad \text{($i^{th}$ example, $s_i$, belongs to the class $j$)}
\]

\subsection{A useful matrix algebra result}
\begin{lemma}
    Let $W(t)$ be a time-varying matrix with singular value decomposition (SVD): $W(t) = U(t) S(t) V(t)^T$, where $U(t)$ and $V(t)$ are orthogonal matrices corresponding to the left and right singular vectors, respectively, and $S(t) = \text{diag}(\sigma_1(t), \sigma_2(t), \ldots, \sigma_k(t))$ contains the singular values along its diagonal. Let $u_k(t)$ and $v_k(t)$ denote the $k^{th}$ column vectors of $U(t)$ and $V(t)$, respectively.
    Then the time derivative of the $k^{th}$ singular value, $\sigma_k(t)$, is given by:
    \[
        \dot{\sigma}_k(t) = u_k(t)^T \dot{W}(t) v_k(t)
    \]
    \label{lemma:sing_val_dynamics}
\end{lemma}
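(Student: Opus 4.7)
The plan is to extract $\sigma_k(t)$ as a bilinear form in the singular vectors and then differentiate, leveraging the fact that the unit-norm constraints on $u_k$ and $v_k$ kill the boundary terms produced by the product rule.

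First I would start from the SVD identity $W(t) v_k(t) = \sigma_k(t)\, u_k(t)$, and pre-multiply by $u_k(t)^T$. Since $U(t)$ is orthogonal, $u_k(t)^T u_k(t) = 1$, so this yields the scalar relation
\begin{equation}
\sigma_k(t) = u_k(t)^T W(t) v_k(t). \label{eq:sigmak}
\end{equation}
Differentiating \eqref{eq:sigmak} with respect to $t$ using the product rule gives three contributions:
\begin{equation}
\dot{\sigma}_k(t) = \dot{u}_k(t)^T W(t) v_k(t) + u_k(t)^T \dot{W}(t) v_k(t) + u_k(t)^T W(t) \dot{v}_k(t).
\end{equation}

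Next I would simplify the first and third terms using the SVD relations $W v_k = \sigma_k u_k$ and $u_k^T W = \sigma_k v_k^T$. This reduces them to $\sigma_k\, \dot{u}_k^T u_k$ and $\sigma_k\, v_k^T \dot{v}_k$ respectively. Then I would invoke the normalization constraints $u_k^T u_k \equiv 1$ and $v_k^T v_k \equiv 1$; differentiating each in $t$ yields $2\, u_k^T \dot{u}_k = 0$ and $2\, v_k^T \dot{v}_k = 0$, so both terms vanish. What remains is precisely
\begin{equation}
\dot{\sigma}_k(t) = u_k(t)^T \dot{W}(t)\, v_k(t),
\end{equation}
which is the claimed identity.

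The only delicate point is the regularity of $u_k(t)$ and $v_k(t)$: the SVD factors are only guaranteed to be differentiable in $t$ when the singular value $\sigma_k(t)$ is simple (i.e.\ has multiplicity one and is separated from the rest of the spectrum). At points where singular values cross or coalesce, $u_k$ and $v_k$ are not uniquely determined and the argument must be interpreted in terms of a smooth local selection. I would handle this by stating the result under the standing assumption that $\sigma_k(t)$ is simple on the time interval of interest, which is the generic case and is implicitly used in the surrounding analysis of the dynamics of $\fo$ and $W$. Under this assumption, standard perturbation theory (e.g.\ analyticity of simple eigenvalues of $W^T W$) guarantees the differentiability needed to justify the product rule above.
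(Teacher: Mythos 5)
Your proof is correct and is essentially the same argument as the paper's: both rest on the product rule plus the observation that $u_k^T\dot{u}_k = v_k^T\dot{v}_k = 0$, obtained by differentiating the unit-norm constraints. You differentiate the scalar identity $\sigma_k = u_k^T W v_k$ whereas the paper differentiates the matrix identity $W = USV^T$ and reads off the $k$th diagonal entry, but the cancellations are identical; your closing caveat that differentiability of $u_k, v_k$ requires $\sigma_k$ to be a simple singular value is a legitimate regularity point that the paper leaves implicit.
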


\begin{proof}
    For sake of brevity, we will drop the explicit time-dependence of each matrix from the notations.
    Let us write the singular vector decomposition (SVD) of matrix, $W = U S V^T$. Using the product rule of differentiation:
    \begin{align}
        \dot{W} &= \dot{U} S V^T + U \dot{S} V^T + U S \dot{V}^T \nonumber \\
        \implies U^T \dot{W} V &= U^T \dot{U} S + \dot{S} + S \dot{V}^T V \nonumber \\
        \implies \dot{S} &= U^T \dot{W} V - U^T \dot{U} S - S \dot{V}^T V \nonumber \\
        \implies \dot{\sigma_k} &= u_k^T \dot{W} v_k - u_k^T \dot{u_k} \sigma_k - \sigma_k \dot{v_k}^T v_k
    \end{align}
    where the last line is the expression for the $k^{th}$ diagonal element of $S$. By definition of orthonormal vectors, $u_k^T u_k = 1$. So, $\dot{u_k}^T u_k + u_k^T \dot{u_k}= 0$. Since $\dot{u_k}^T u_k$ is a scalar, $\dot{u_k}^T u_k = u_k^T\dot{u_k}$. Therefore, $\dot{u_k}^T u_k = 0$. Similarly, $\dot{v_k}^T v_k = 0$. Therefore, 
    \begin{equation}
        \dot{\sigma}_k = u_k^T \dot{W} v_k \nonumber
    \end{equation}
\end{proof}

\subsection{Formal versions of theoretical results and proofs}
\begin{theorem}
    Let $\fo = U_1 S_1 V_1^T$ and $W = U_2 S_2 V_2^T$ denote the respective singular value decompositions (SVDs) of non-degenerate matrices $\fo$ and $W$, respectively. If the system is initialized such that $\fo^T \fo = W W^T$, then it holds that:
    \begin{equation*}
        V_1 = U_2  \quad , \quad S_1^2 = S_2^2
    \end{equation*}
    \label{thm:alignment}
\end{theorem}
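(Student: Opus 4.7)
The plan is to read the hypothesis $\fo^T\fo = WW^T$ as an identity between two spectral decompositions of the \emph{same} $d \times d$ symmetric positive semi-definite matrix, and then invoke uniqueness of that decomposition under the non-degeneracy assumption. Concretely, I would first expand $\fo^T\fo$ using the SVD $\fo = U_1 S_1 V_1^T$: orthogonality of $U_1$ collapses $U_1^T U_1$ to the identity, yielding $\fo^T\fo = V_1 (S_1^T S_1) V_1^T$. Analogously, $W W^T = U_2 (S_2 S_2^T) U_2^T$. Both expressions are bona fide spectral decompositions of a $d \times d$ symmetric PSD matrix: the outer factors $V_1$ and $U_2$ are orthogonal with columns spanning $\mathbb{R}^d$, and the inner diagonal factors collect the squared singular values of $\fo$ and $W$ respectively.

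Next, the initial condition $\fo^T\fo = W W^T$ equates these two spectral decompositions of the same matrix. Invoking uniqueness of the spectral decomposition of a symmetric matrix with distinct eigenvalues --- which is exactly what the ``non-degenerate'' hypothesis buys us, after adopting the standard convention of ordering singular values in decreasing magnitude --- forces the diagonal factors to coincide, $S_1^T S_1 = S_2 S_2^T$, and the orthogonal factors to agree up to per-column sign. The first identity, read entry-by-entry on the diagonal, is exactly $\sigma_i(\fo)^2 = \sigma_i(W)^2$ for each $i$, which is the intended content of $S_1^2 = S_2^2$ in the statement. After fixing a standard sign convention on the columns of $V_1$ (equivalently $U_2$), the second identification gives $V_1 = U_2$.

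The only real subtlety, and hence the only ``hard part'' worth flagging, is the combined sign-and-ordering ambiguity intrinsic to any SVD. Non-degeneracy rules out the more serious ambiguity that repeated singular values would introduce, where one would only recover equality of eigenspaces rather than of individual eigenvectors; the residual per-column sign ambiguity is harmless and is absorbed by a convention. Thus the proof is essentially a one-step unfolding of the two SVDs followed by the uniqueness of the spectral decomposition, with the entire mathematical content living inside the non-degeneracy clause.
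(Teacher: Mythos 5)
Your linear-algebra step---expanding both SVDs, recognizing two spectral decompositions of the same symmetric PSD matrix, and invoking uniqueness under non-degeneracy---reproduces the \emph{final} step of the paper's argument, and your treatment of the sign and ordering ambiguities is in fact more careful than the paper's. But there is a genuine gap: you have established the conclusion only at the instant where the hypothesis $\fo^T\fo = WW^T$ holds, i.e.\ at initialization. The theorem is a statement about the gradient-descent trajectory: $\fo$ and $W$ are time-varying under the dynamics $\dot{\fo} = -\eta A W^T$, $\dot{W} = -\eta \fo^T A$, and the alignment $V_1 = U_2$, $S_1^2 = S_2^2$ is needed---and is used in the subsequent theorem on singular-value growth---at every time $t$, not just at $t=0$.

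The content you have skipped is the conservation law. Differentiating along the flow,
\begin{equation*}
\frac{d}{dt}\big(\fo^T\fo\big) \;=\; -\eta\, W A^T \fo \;-\; \eta\, \fo^T A W^T \;=\; \frac{d}{dt}\big(W W^T\big),
\end{equation*}
so $\fo^T\fo - WW^T$ is an invariant of the dynamics; the initialization condition sets this invariant to zero, and only then does your static uniqueness argument apply at all later times. Without this step the proof establishes the claim only where it is assumed, which is not enough for the downstream use of the result.
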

\begin{proof}
    Let us start from the learning dynamics imposed by gradient-descent:
    \begin{equation}
        \dot{\fo} = - \eta A W^T \quad , \quad \dot{W} = - \eta \fo^T A
    \end{equation}
    Let us write $\fo$ and $W$ as their respective singular value decomposed form, i.e. say $\fo = U_1 S_1 V_1^T$ and $W = U_2 S_2 V_2$.
    Consider the dynamics of $\fo ^T \fo$ and $W W^T$:
    \begin{align}
        \frac{d}{dt} (\fo ^T \fo) &= \dot{\fo}^T \fo + \fo \dot{\fo} = (-\eta A W^T)^T \fo + \fo^T (-\eta A W^T) \nonumber \\
        \label{eq:f_cov_dynamics}
        &= -\eta W A^T \fo - \eta \fo^T A W^T \\
        \frac{d}{dt} (W W^T) &= \dot{W} W^T + W \dot{W}^T = (-\eta \fo^T A) W^T + W(-\eta \fo^T A)^T \nonumber \\
        \label{eq:W_cov_dynamics}
        &= - \eta \fo^T A W^T -\eta W A^T \fo 
    \end{align}
    From \cref{eq:f_cov_dynamics,eq:W_cov_dynamics}, it is clear that $\frac{d}{dt} (\fo ^T \fo) = \frac{d}{dt} (W W^T)$, i.e. $\fo ^T \fo = W W^T + C$, for some constant $C$. If we assume the initialization to be such that $C=0$ and $\fo$ and $W$ are non-degenerate, we have:
    \begin{equation}
        \fo ^T \fo = W W^T \implies V_1 S_1^2 V_1^T = U_2 S_2^2 U_2^T \nonumber
    \end{equation}
    By uniqueness of SVD (for positive semi-definite matrices):
    \begin{empheq}[box=\fbox]{align}
        V_1 = U_2 &\implies V_1^T U_2 = I   \nonumber \\
        S_1^2 &= S_2^2 \nonumber
    \end{empheq}
\end{proof}

\begin{theorem}
    Let $\fo, W$ be the matrices whose dynamics are governed by the gradient-descent equations as previously defined. Given the conditions from \Cref{thm:alignment}, the magnitude of the time derivatives of the $i^{th}$ singular values of $\fo$ and $W$ are proportional to their respective singular values:
    \begin{align*}
        \|\dot{\sigma}_{1i}\| &\propto \sigma_{1i} \\
        \|\dot{\sigma}_{2i}\| &\propto \sigma_{2i}
    \end{align*}
    Furthermore, assuming uniform class prediction at initialization and that number of classes, $|\mathcal{V}| \gg 1$, the time derivatives are bounded by the dominant class size:
    \[
        \|\dot{\sigma}_{1i}\|, \|\dot{\sigma}_{2i}\| \propto \mathcal{O}(\mathcal{N}(c^{(0)}))
    \]
    where $\mathcal{N}(c^{(0)})$ denotes the number of instances belonging to the dominant class $c^{(0)}$.
\end{theorem}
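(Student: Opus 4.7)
The plan is to combine Lemma~\ref{lemma:sing_val_dynamics} (singular-value dynamics under a general flow) with the alignment conclusion of Theorem~\ref{thm:alignment} ($V_1 = U_2$ and $S_1^2 = S_2^2$) and then substitute the gradient-descent update rules $\dot{\fo} = -\eta A W^T$ and $\dot{W} = -\eta \fo^T A$. The proportionality claim will follow by a clean cancellation that leaves exactly one factor of the relevant singular value outside a bilinear form $u_{1i}^T A v_{2i}$; the order-of-magnitude bound will then come from inspecting the structure of $A$ at the uniform-prediction initialization.

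First I would apply Lemma~\ref{lemma:sing_val_dynamics} to $W$:
\begin{equation*}
\dot{\sigma}_{2i} \;=\; u_{2i}^T \dot{W} v_{2i} \;=\; -\eta \, u_{2i}^T \fo^T A \, v_{2i}.
\end{equation*}
Writing $\fo^T = V_1 S_1 U_1^T$ and using $V_1 = U_2$ from Theorem~\ref{thm:alignment}, the vector $u_{2i}^T V_1 = e_i^T$ picks out the $i$-th row of $S_1 U_1^T$, giving $u_{2i}^T \fo^T = \sigma_{1i} \, u_{1i}^T$. Combined with $\sigma_{1i} = \sigma_{2i}$, this yields
\begin{equation*}
\dot{\sigma}_{2i} \;=\; -\eta \, \sigma_{2i} \, \bigl( u_{1i}^T A \, v_{2i} \bigr),
\end{equation*}
which already establishes $\|\dot{\sigma}_{2i}\| \propto \sigma_{2i}$. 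A symmetric computation for $\fo$, now using $W^T v_{1i} = V_2 S_2 U_2^T v_{1i} = \sigma_{2i} v_{2i}$ (again via $U_2 = V_1$), gives $\dot{\sigma}_{1i} = -\eta \, \sigma_{1i} (u_{1i}^T A v_{2i})$, completing the first half.

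For the bound, I would specialize $A$ to the uniform-prediction initialization. With $\alpha_j(s_i) = 1/|\mathcal{V}|$ and $|\mathcal{V}| \gg 1$, each entry $A_{ij}$ is $\approx -1$ when $c_i = j$ and $\approx 0$ otherwise, so $A \approx -E$ where $E$ is the one-hot class-indicator matrix. The column of $E$ corresponding to class $c$ has squared Euclidean norm $\mathcal{N}(c)$, and hence $E^T E$ is diagonal with entries $\mathcal{N}(c)$; the dominant class $c^{(0)}$ controls both the operator norm and the characteristic entry magnitude of $A^T A$. Bounding the bilinear form $|u_{1i}^T A v_{2i}|$ by quantities controlled by $\|A\|$ and by the structure of $A^T A$ then yields the stated order of magnitude tied to $\mathcal{N}(c^{(0)})$.

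The main obstacle I anticipate is making the last step precise: naively one only gets $|u_{1i}^T A v_{2i}| = \mathcal{O}(\sigma_{\max}(A)) = \mathcal{O}(\sqrt{\mathcal{N}(c^{(0)})})$ from Cauchy--Schwarz, whereas the statement asks for a bound scaling with $\mathcal{N}(c^{(0)})$ itself. Closing this gap requires either (i) invoking the additional alignment between $u_{1i}$, $v_{2i}$ and the indicator direction of the dominant class that emerges during the warmup phase, so that the inner product saturates its upper bound coherently rather than incoherently, or (ii) interpreting the claimed scaling as a bound on $|\dot{\sigma}_{2i}^2|$ (the squared singular value), where the factor of $\sigma_{2i}$ combines with the $\sqrt{\mathcal{N}(c^{(0)})}$ from the bilinear form to yield $\mathcal{N}(c^{(0)})$ once the warmup-phase alignment $\sigma_{2i} \sim \sqrt{\mathcal{N}(c^{(0)})}$ is used. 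Either interpretation is compatible with the qualitative point the theorem is making, namely that the early-phase growth of the singular values is both self-reinforcing (primacy of the largest $\sigma_i$) and driven by the most frequent class (primacy bias over classes).
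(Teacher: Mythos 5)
Your derivation of the proportionality claim is essentially identical to the paper's: both apply \Cref{lemma:sing_val_dynamics}, substitute the gradient-flow equations, insert the SVDs, and use $V_1 = U_2$ together with $S_1^2 = S_2^2$ from \Cref{thm:alignment} to collapse the resulting sum to the single term $-\eta\,(u_{1i}^T A v_{2i})\,\sigma_{2i}$ (resp.\ $\sigma_{1i}$); the paper does this by expanding $\sum_j (u_{1i}^T A v_{2j})\sigma_{2j}\delta_{i=j}$ while you pick out the $i$-th row directly via $u_{2i}^T V_1 = e_i^T$, which is the same computation. The one place where you diverge is the final order-of-magnitude bound, and the ``gap'' you flag is not actually an obstacle. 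The paper does not attempt a Cauchy--Schwarz argument at all: it writes $g_i = \sum_{j,k} u_{1ij} A_{jk} v_{2ik}$, specializes to $\alpha_k(s_j)\approx 1/|\mathcal{V}|$ with $|\mathcal{V}|\gg 1$ so that $A$ is approximately the negated class-indicator matrix (exactly your $A \approx -E$), and then bounds the surviving sum $\sum_k v_{2ik}\sum_{j:\,c_j=k} u_{1ij}$ by counting terms, with each unit-vector component bounded by $1$; the dominant class contributes $\mathcal{N}(c^{(0)})$ terms, giving $\mathcal{O}(\mathcal{N}(c^{(0)}))$ as a crude upper bound. Since the theorem asserts only an upper bound (``$\propto\mathcal{O}(\cdot)$''), your tighter estimate $|u_{1i}^T A v_{2i}| = \mathcal{O}(\sigma_{\max}(A)) = \mathcal{O}(\sqrt{\mathcal{N}(c^{(0)})})$ strictly implies the stated conclusion, and neither of your proposed repairs (invoking warmup-phase alignment of $u_{1i}, v_{2i}$ with the dominant-class indicator, or reinterpreting the claim as concerning squared singular values) is needed. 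You would only face a genuine difficulty if the theorem claimed a matching lower bound, which it does not.
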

\begin{proof}
    Let us start from the results of \Cref{thm:alignment}: $S_1^2 = S_2^2 \implies \sigma_{1i}^2 = \sigma_{2i}^2$ $\forall i$. So, $\sigma_{1i} = \pm \sigma_{2i}$. Using this relation, we can simplify the expression of $\sigma_{1i}$ dynamics. From \Cref{lemma:sing_val_dynamics}, 
    \begin{align}
        \dot{\sigma}_{1i} &= u_{1i}^T \dot{\fo} v_{1i} = - \eta u_{1i}^T A W^T v_{1i} \nonumber \\
        &= - \eta u_{1i}^T A (U_2 S_2 V_2^T)^T v_{1i} = - \eta u_{1i}^T A V_2 S_2 U_2^T v_{1i} \nonumber \\
        &= - \eta u_{1i}^T A V_2 S_2 V_1^T v_{1i} \quad \quad \text{[Using \Cref{thm:alignment}]} \nonumber \\
        &= - \eta \sum_j (u_{1i}^T A v_{2j}) \sigma_{2j} (v_{1j} v_{1i}) = - \eta \sum_j (u_{1i}^T A v_{2j}) \sigma_{2j} \delta_{i=j} \nonumber \\
        \implies \dot{\sigma}_{1i} &= - \eta (u_{1i}^T A v_{2i}) \sigma_{2i}
        \label{eq:sigma_1_dot}
    \end{align}
    Similarly, we can simplify the dynamics for $\sigma_{2i}$:
    \begin{equation}
        \dot{\sigma}_{2i} = - \eta (u_{1i}^T A v_{2i}) \sigma_{1i}
        \label{eq:sigma_2_dot}
    \end{equation}
    For sake of brevity, let us denote $(u_{1i}^T A v_{2i}) = g_i$. Using the relationship between $\sigma_{1i}$ and $\sigma_{2i}$, we can simplify \cref{eq:sigma_1_dot,eq:sigma_2_dot} as:
    \begin{align}
        \dot{\sigma}_{1i} = - \eta g_i (\pm \sigma_{1i}) = \mp \eta g_i \sigma_{1i} \quad &, \quad \dot{\sigma}_{2i} = - \eta g_i (\pm \sigma_{2i}) = \mp \eta g_i \sigma_{2i}
        \label{eq:sigma_dot}
    \end{align}
    \begin{empheq}[box=\fbox]{align}
        \implies \| \dot{\sigma}_{1i} \| \propto \sigma_{1i} \quad &, \quad \| \dot{\sigma}_{2i} \| \propto \sigma_{2i}
    \end{empheq}
    Also, note that $g_i = u_{1i}^T A v_{2i} = \sum_{j,k} u_{1ij} A_{jk} v_{2ik}$, where $A_{jk} = \{\alpha_k (s_j) - 1, \alpha_k (s_j)\}$. Therefore, $A_{jk} \in (-1,1)$. 
    
    At initialization, WLOG $\alpha_k(s_j) \approx \frac{1}{|\mathcal{V}|}$ $\forall j,k$, i.e. uniform class prediction. Additionally, assuming $|\mathcal{V}| >> 1$, we can estimate $g_i$ as the following:
    \begin{align}
        g_i &= \sum_{j,k} u_{1ij} A_{jk} v_{2ik} = \sum_k \left(\sum_{j \in \{c_j = k\}} u_{1ij} (\alpha_k (s_j) - 1) v_{2ik} + \sum_{j \in \{c_j \neq k\}} u_{1ij} \alpha_k (s_j) v_{2ik}\right) \nonumber \\
        \implies g_i & \approx \sum_k \left((\frac{1}{|\mathcal{V}|} - 1) \sum_{j \in \{c_j = k\}} u_{1ij}  v_{2ik} + \frac{1}{|\mathcal{V}|}\sum_{j \in \{c_j \neq k\}} u_{1ij}  v_{2ik}\right) \nonumber \\
        & \approx - (\sum_k v_{2ik} ) (\sum_{j \in \{c_j = k\}} u_{1ij}) = \mathcal{O}(\mathcal{N}(c^{0}))
        \label{eq:g_i_bound}
    \end{align}
    where $c^{(0)}$ denotes the dominant class, i.e. the class with most number of instances. Combining \cref{eq:g_i_bound} with \cref{eq:sigma_dot}, we get the desired result:
    \begin{empheq}[box=\fbox]{equation}
        \| \dot{\sigma}_{1i} \|, \| \dot{\sigma}_{2i} \| \propto \mathcal{O}(\mathcal{N}(c^{0}))
    \end{empheq}
\end{proof}

\newpage
\section{Additional Experimental results}
\label{appendix:additional_results}

\subsection{Generative behavior of LLMs: Qualitative inspection}
\begin{figure}[h]
    \centering
    \includegraphics[width=0.9\linewidth]{figures/Supplementary/echolalia.pdf}
    \caption{\textbf{Early checkpoints exhibit echolalia.} Generated text from early checkpoints (step-1000) of OLMo-2-7B models shows repetitive, non-contextual patterns characteristic of the \warmup phase, contrasting sharply with coherent outputs from later checkpoints (step-920000).}
    \label{fig:echolalia}
\end{figure}

\newpage

\subsection{Computing spectral metrics, \texttt{RankMe} and \texorpdfstring{$\alpha$}{alpha}ReQ}

\begin{figure}[!ht]
    \centering
    \includegraphics[width=\linewidth]{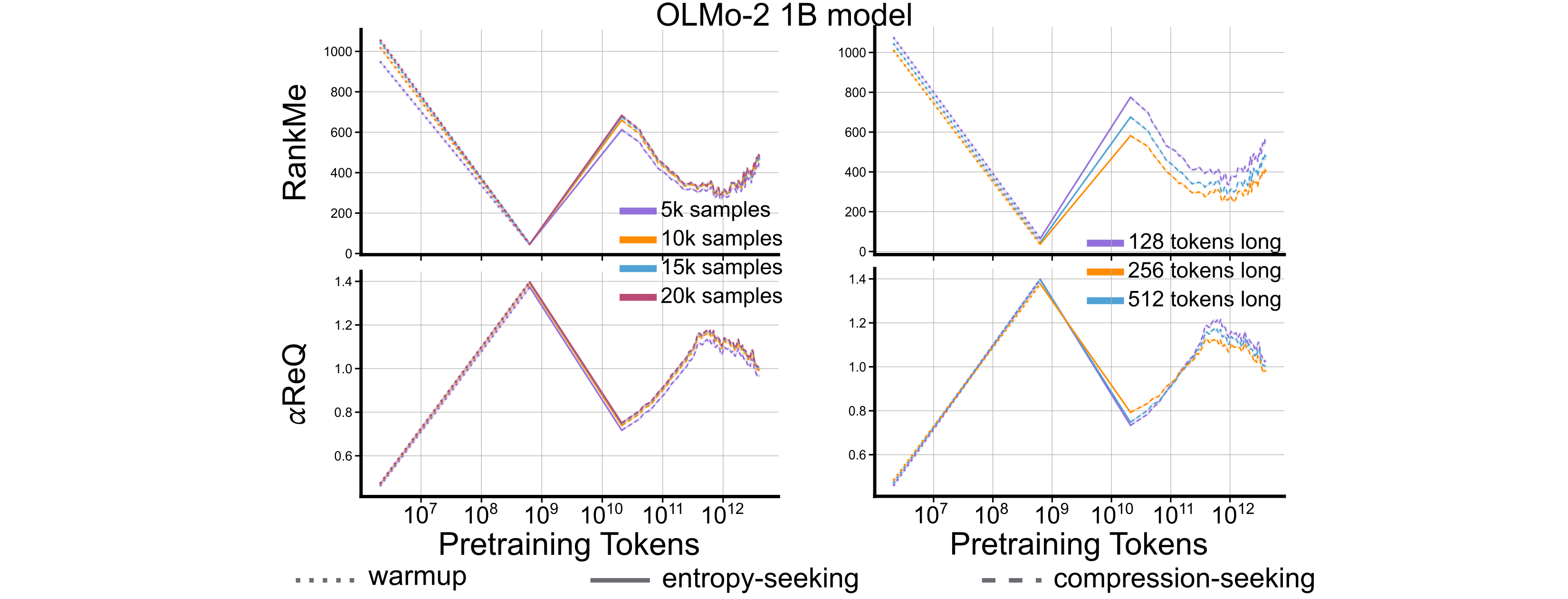}
    \caption{\textbf{Spectral metrics are robust to sample count and sequence length.} $\RankMe{}$ and $\alphaReQ$ computed for intermediate checkpoints of the OLMo-2-1B model using \textbf{(Left)} different number of samples, and \textbf{(Right)} sequence length. The three-phase pattern remains consistent across these methodological choices. Shaded error bars indicate standard error about mean.}
    \label{fig:rank_alpha_ablations}
\end{figure}

\begin{figure}[!ht]
    \centering
    \includegraphics[width=\linewidth]{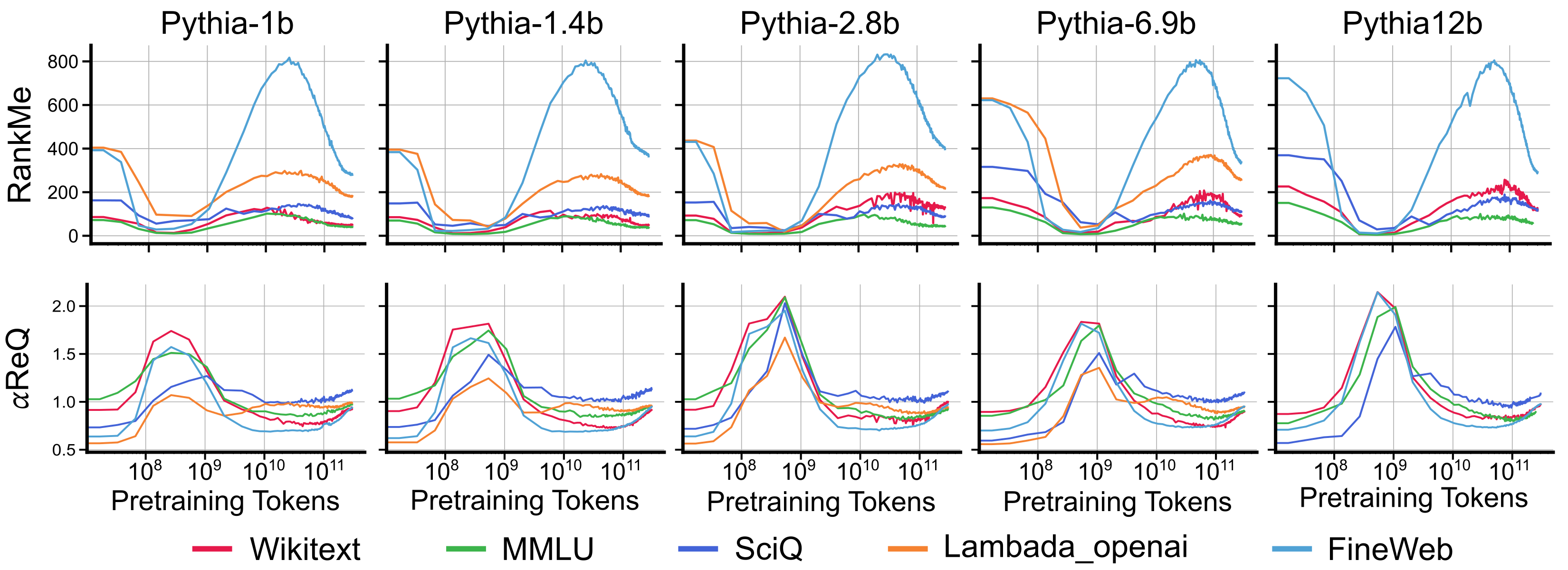}
    \caption{\textbf{Pythia spectral metrics are robust across datasets.} $\RankMe{}$ and $\alphaReQ$ computed for intermediate checkpoints of models from the Pythia family (1B-12B) on different datasets, showing consistent phase patterns across evaluation data.}
    \label{fig:pythia_datasets}
\end{figure}

\begin{figure}[!ht]
    \centering
    \includegraphics[width=\linewidth]{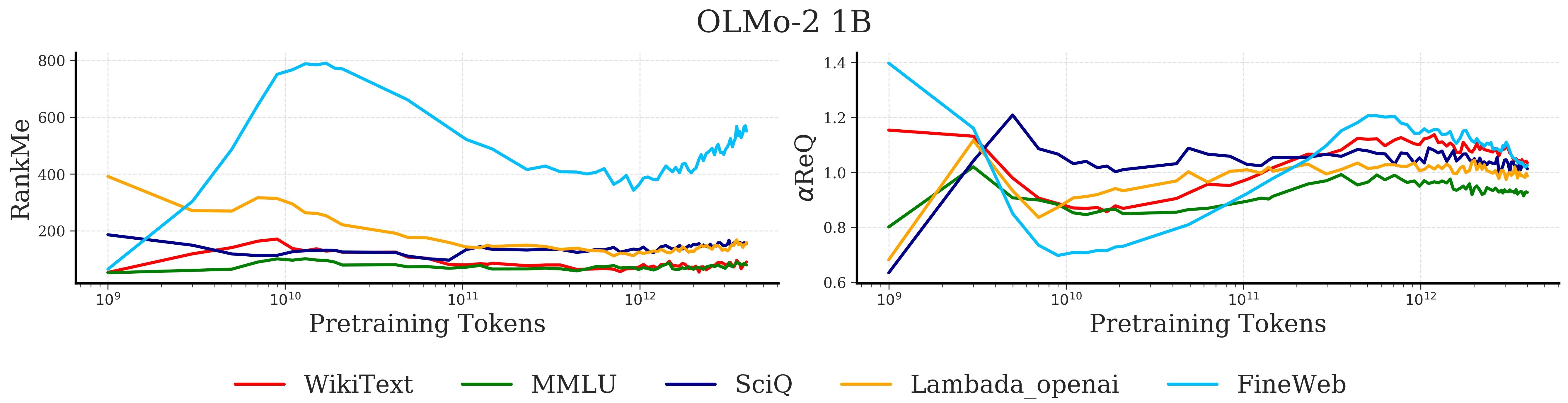}
    \caption{\textbf{OLMo spectral metrics are robust across datasets.} $\RankMe{}$ and $\alphaReQ$ computed for intermediate checkpoints of OLMo-2 1B model on different datasets, showing consistent phase patterns across evaluation data.}
    \label{fig:pythia_smaller_models}
\end{figure}

\begin{figure}[!ht]
    \centering
    \includegraphics[width=\linewidth]{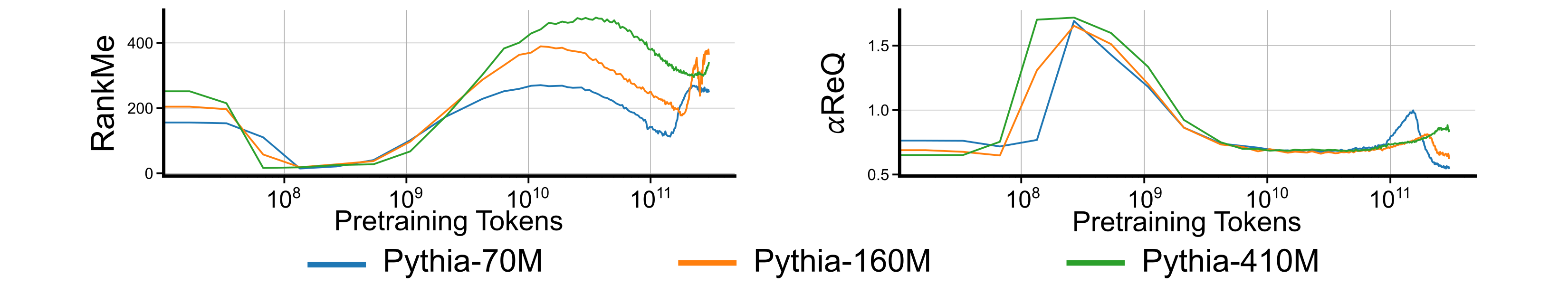}
     \caption{\texttt{RankMe} and $\alpha$ReQ computed for intermediate checkpoints of smaller models (< 1B) from the Pythia family on the FineWeb dataset.}
    \label{fig:OLMo2_1B_spectral_metrics}
\end{figure}



\begin{table}[h]
\centering
\tablestyle{6pt}{1.1}
\begin{tabular}{lcc}
\toprule
\textbf{Model} & \textbf{$\alphaReQ$ (p-value)} & \textbf{$\RankMe{}$ (p-value)} \\
\midrule
Pythia-1B   & 0.810 (1.50e-5) & -0.759 (1.04e-4) \\
Pythia-1.4B & 0.668 (1.29e-3) & -0.713 (4.18e-4) \\
Pythia-2.8B & 0.694 (6.88e-4) & -0.635 (2.63e-3) \\
Pythia-6.9B & 0.837 (4.20e-6) & -0.885 (2.19e-7) \\
Pythia-12B  & 0.836 (4.42e-6) & -0.839 (3.79e-6) \\
OLMo2-1B  & 0.540 (4.920e-7) & -0.616 (3.201e-9) \\
\bottomrule
\end{tabular}
\vspace{0.3cm} 
\caption{\textbf{SciQ accuracy correlates with spectral geometry.} Positive correlation with $\alphaReQ$ (compactness) and negative correlation with $\RankMe{}$ (effective dimensionality) across Pythia (1--12B) and OLMo2-1B models. The p-values in parentheses indicate high statistical significance for all correlations.}
\label{tab:sciq_corr}
\end{table}

\clearpage

\subsection{Control experiments verifying the necessity condition for multiphase learning dynamics}
\begin{figure}[!htbp]
    \centering
    \begin{subfigure}[b]{0.8\textwidth}
        \includegraphics[width=\textwidth]{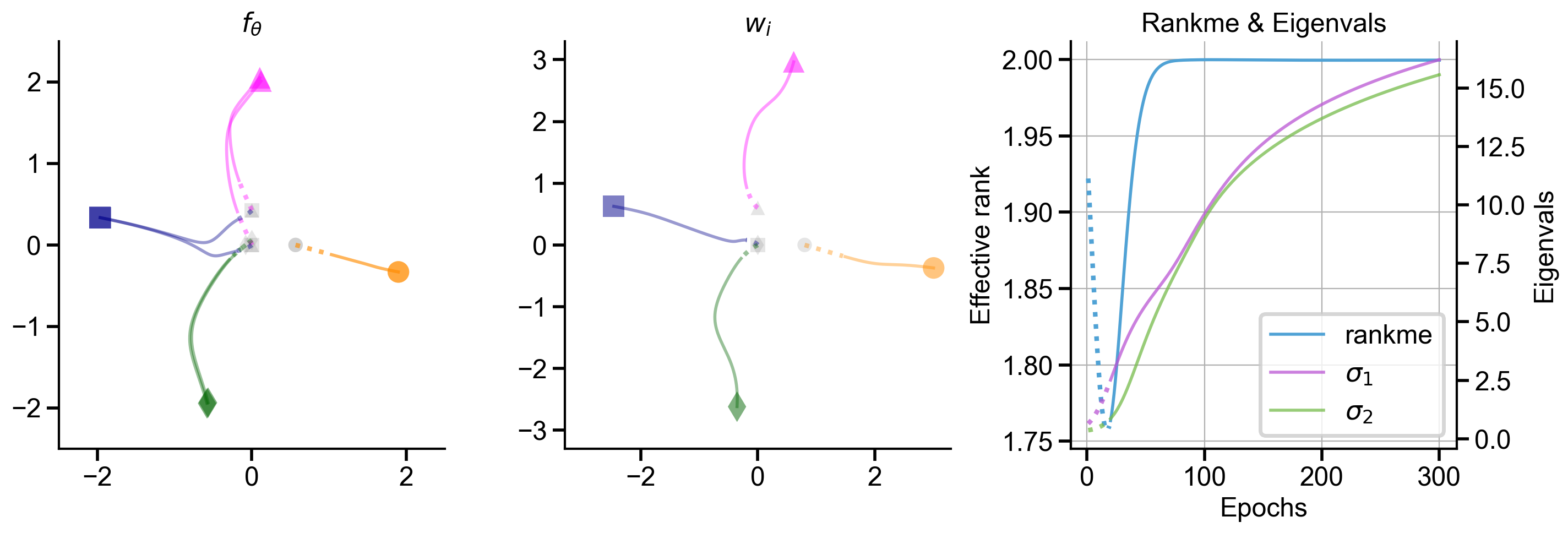} 
        \caption{Feature and weight dynamics in analytically-tractable model with uniform class distribution, i.e. each class has equal number of samples. Here, each class has 2 samples each.}
    \end{subfigure}
    \par\bigskip 

    \begin{subfigure}[b]{0.8\textwidth} 
        \includegraphics[width=\textwidth]{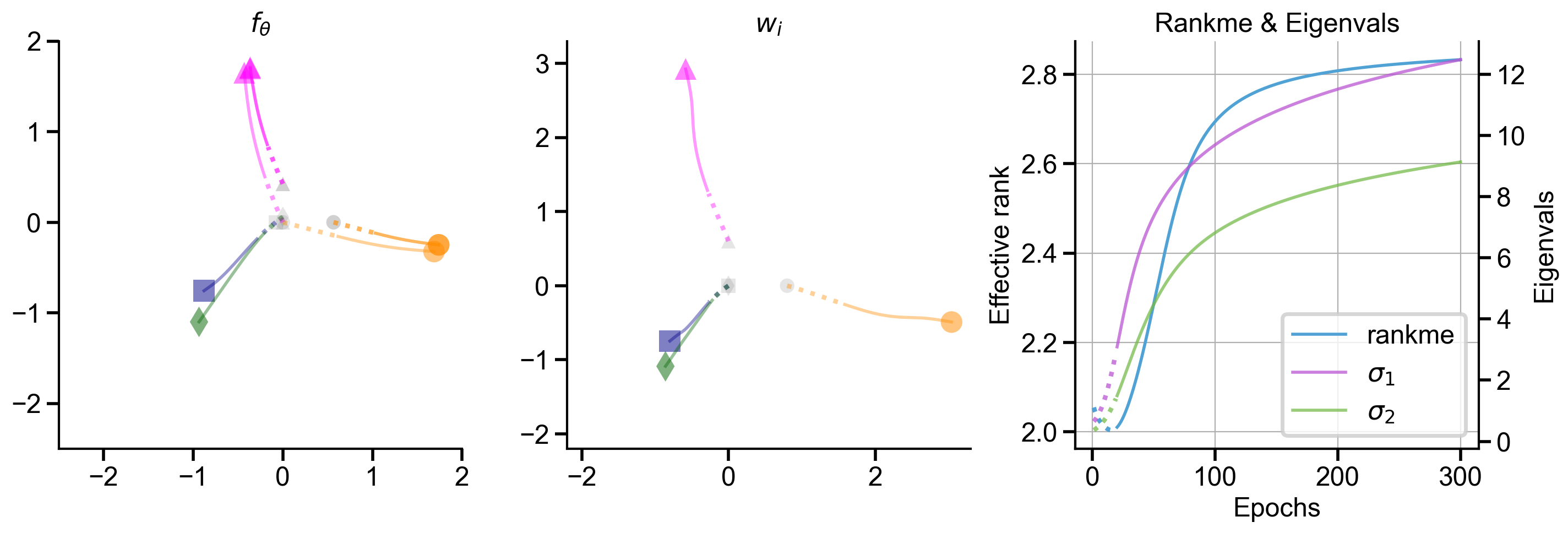} 
        \caption{Feature and weight dynamics in analytically-tractable model with no information bottleneck, i.e. feature dimensionality, $d$, is comparable to number of classes, $|\mathcal{V}|$. Here, $d = 3$ and $|\mathcal{V}|=4$. Note that we only plot the first two dimensions for ease and consistency of visualization.}
    \end{subfigure}
    \caption{Negative control experiments analogous to \Cref{fig:explaining_phases}. Removing either the skewed class distribution or the information bottleneck gets rid of the three distinct phases of learning. In each case, the resulting dynamics is an initial \warmup, followed by an \entropy phase wherein effective rank continues to grow monotonically.}
    \label{fig:mainfigure}
\end{figure}

\begin{figure}[!htbp]
    \centering
    \begin{subfigure}[b]{0.8\textwidth}
        \includegraphics[width=\textwidth]{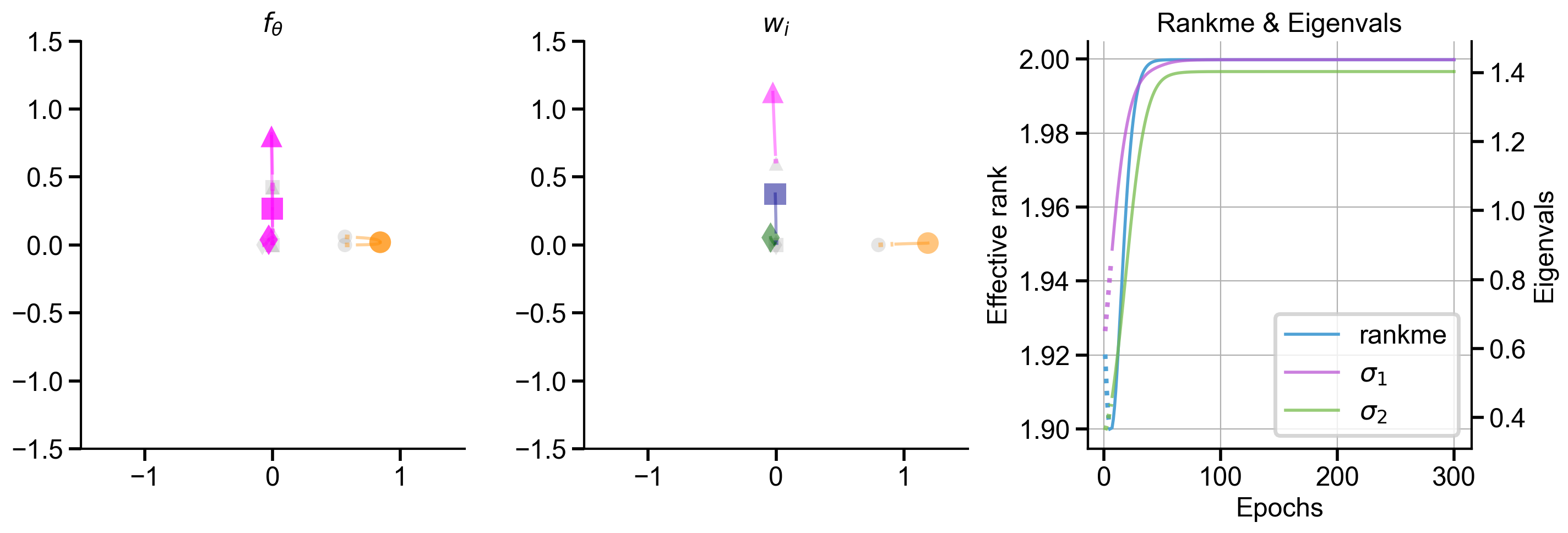}
        \caption{Feature and weight dynamics in analytically-tractable model trained using MSE loss on a uniform class distribution, i.e. each class has equal number of samples. Here, each class has 2 samples each.}
    \end{subfigure}
    \par\bigskip 

    \begin{subfigure}[b]{0.8\textwidth} 
        \includegraphics[width=\textwidth]{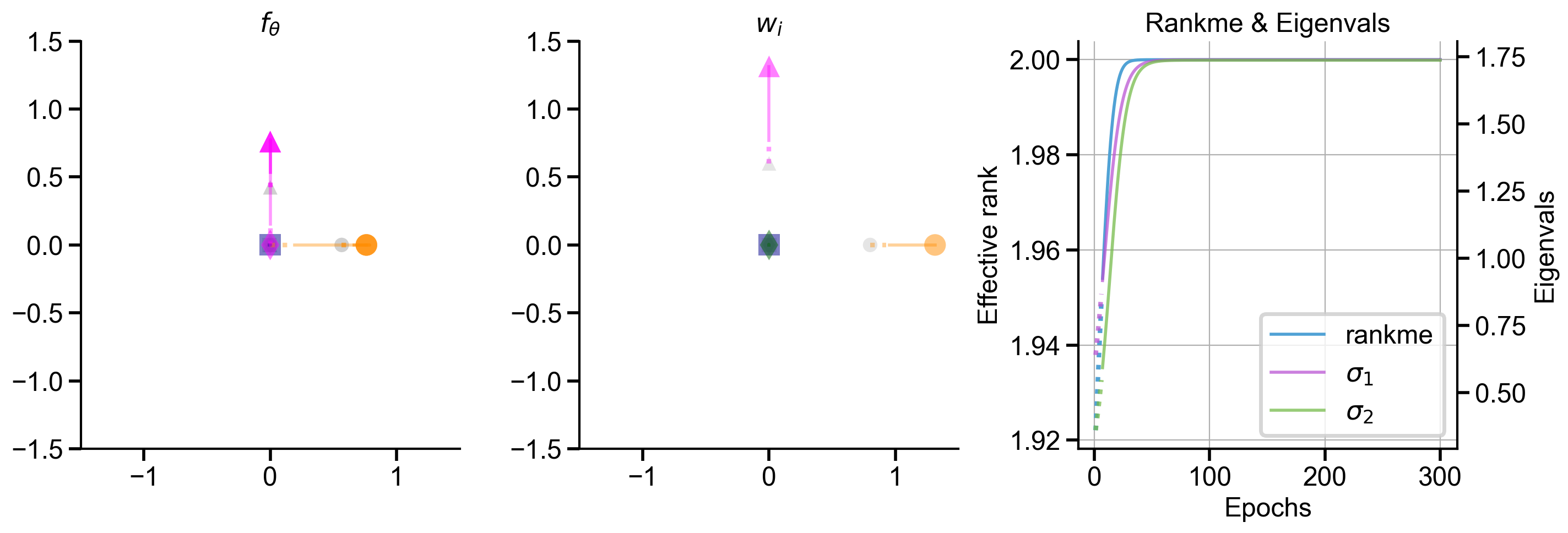}
        \caption{Feature and weight dynamics in analytically-tractable model trained using MSE loss on a skewed class distribution. Note that only information about the most frequently occurring classes are learned.}
    \end{subfigure}
    \caption{Negative control experiments analogous to \Cref{fig:explaining_phases}, with mean squared error instead of cross-entropy as the training loss. In both uniform and skewed label distribution settings, the resulting dynamics is an initial \warmup, followed by an \entropy phase wherein effective rank grows monotonically and quickly saturates.}
    \label{fig:dynamics_mse_control}
\end{figure}

\clearpage
\subsection{Supervised finetuning}
\begin{figure}[!ht]
    \centering
    \includegraphics[width=\linewidth]{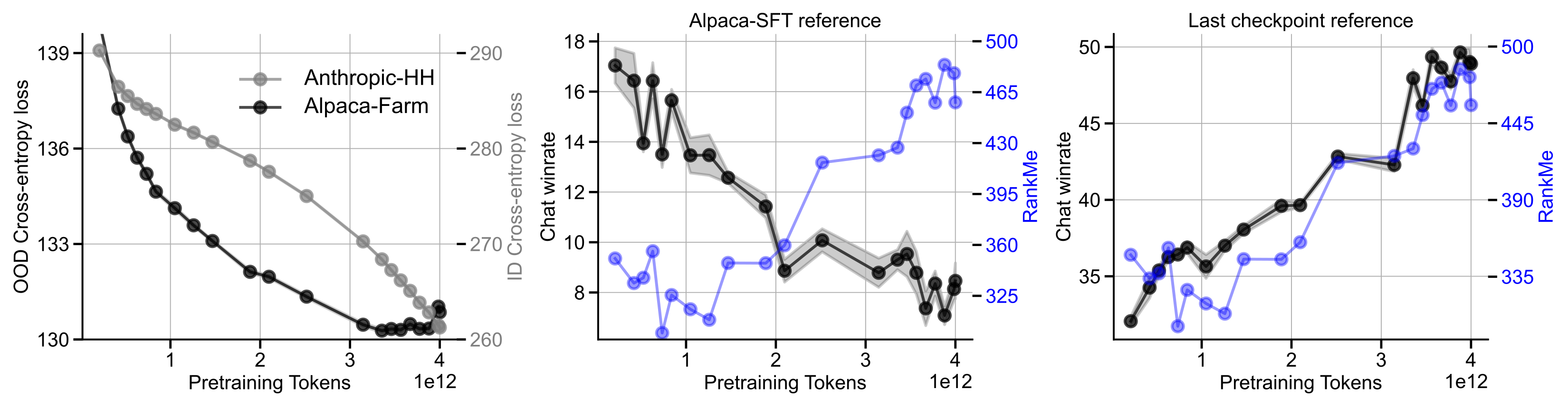}
    \caption{Loss and chat win rates after SFT on Anthropic-HH dataset. \textbf{(Left)} Cross-entropy loss on in-distribution (Anthropic-HH) test set and out-of-distribution (Alpaca-Farm-human-ANN chat) dataset. While in-distribution loss after SFT decreases monotonically, ood loss after SFT saturates or gets slightly worse with longer pretraining. \textbf{(Center)} Length-controlled chat win rates for Anthropic-SFT vs Alpaca-SFT version of a base model on AlpacaEval. Longer pretraining increases the sensitivity of the model's behavior to the SFT dataset. \textbf{(Right)} Length-controlled win rates for Anthropic-SFT version of intermediate base models compared to the Anthropic-SFT version of the final base model checkpoint. Models obtained from later in pretraining are equivalent chat models, demonstrating nearly 50\% win rate compared to the final checkpoint. Choosing the ideal checkpoint to use for SFT requires navigating the tradeoff between an improvement in base model's capability (note an increased \texttt{RankMe}) and reduction in robustness with longer pretraining. Shaded bars indicate standard deviation computed over 5 seeds.}
    \label{fig:OLMo2_1B_SFT_detailed}
\end{figure}

\end{document}